\renewcommand\appendixname{\LARGE \centering Supplementary material for \\ Dynamic Causal Bayesian Optimisation \vspace{1cm}}
\newcommand\StartAppendixEntries{}
	\renewcommand\StartAppendixEntries{\value{tocdepth}=-10000\relax}%
	\edef\maintocdepth{\the\value{tocdepth}}%
	\renewcommand\StartAppendixEntries{\value{tocdepth}=\maintocdepth\relax}%
\newcommand*\appendixwithtoc{%
	\cleardoublepage
	\appendix
	\addtocontents{toc}{\protect\StartAppendixEntries}
	\listofatoc
}
\newcommand*{\addFileDependency}[1]{% argument=file name and extension
  \typeout{(#1)}
  \@addtofilelist{#1}
  \IfFileExists{#1}{}{\typeout{No file #1.}}
}
\newcommand*{\myexternaldocument}[1]{%
    \externaldocument{#1}%
    \addFileDependency{#1.tex}%
    \addFileDependency{#1.aux}%
}
\newcommand{\indep}{\rotatebox[origin=c]{90}{$\models$}}
\DeclareMathOperator*{\argmax}{argmax} % thin space, limits underneath in displays
\DeclareMathOperator*{\argmin}{arg\,min}
\let\emptyset\varnothing
\theoremstyle{definition}
\newtheorem{definition}{Definition}
\theoremstyle{definition}
\newtheorem{proposition}{Proposition}[section]
\theoremstyle{remark}
\theoremstyle{definition}
\newtheorem{assumptions}{Assumptions}
\newtheorem{theorem}{Theorem}
\newtheorem{corollary}{Corollary}
\setlist[itemize]{leftmargin=*}
\newcommand{\mat}[1]{\mathbf{#1}}
\renewcommand{\vec}[1]{ \mathbf{#1} } % math bold
\newcommand{\dataset}{\mathcal{D}}
\newcommand{\x}{\vec{x}}
\newcommand{\X}{\vec{X}}
\newcommand{\y}{\vec{y}}
\newcommand{\W}{\mat{W}}
\newcommand{\normal}{\mathcal{N}}
\newcommand{\expectation}[2]{ \mathbb{E}_{#1}{\left[#2\right]} }
\newcommand{\pa}[1]{\text{Pa}\!\left(#1\right)}
\newcommand{\DIntXst}{\dataset^I_{s,t}}
\newcommand{\DIntXstzero}{\dataset^I_{s,t=0}}
\newcommand{\DIntXstX}{\mat{X}^I}
\newcommand{\DIntXstY}{\mat{Y}^I_{s,t}}
\newcommand{\graph}{\mathcal{G}}
\newcommand{\missets}{\mathbb{M}}
\newcommand{\DO}[2]{\operatorname{do} \!  \left(#1 = #2\right)}
\newcommand{\Ypt}{\Yt^{\acro{PT}}}
\newcommand{\Ypnt}{\Yt^{\acro{PNT}}}
\newcommand{\Apy}{\Xst^{\acro{PY}}}
\newcommand{\Bpy}{\IPrev^{\acro{PY}}}
\newcommand{\Anpy}{\Xst^{\acro{NPY}}}
\newcommand{\Bnpy}{\IPrev^{\acro{NPY}}}
\newcommand{\Apw}{\Xst^{\acro{PW}}}
\newcommand{\Bpw}{\IPrev^{\acro{PW}}}
\newcommand{\apw}{\x^{\acro{PW}}}
\newcommand{\xw}{\x^{\acro{W}}}
\newcommand{\bpw}{\iprev^{\acro{PW}}}
\newcommand{\ypt}{\yt^{\acro{PT}}}
\newcommand{\apy}{\x^{\acro{PY}}}
\newcommand{\bpy}{\iprev^{\acro{PY}}}
\newcommand{\anpy}{\x^{\acro{NPY}}}
\newcommand{\bnpy}{\iprev^{\acro{NPY}}}
\newcommand{\fW}{f_W}
\newcommand{\uW}{\mat{u}_W}
\newcommand{\R}{R}
\newcommand{\rval}{r}
\newcommand{\fyy}{f_Y^{Y}}
\newcommand{\fyny}{f_Y^{\text{NY}}}
\newcommand{\fst}{f_{s,t}}
\newcommand{\vecfopt}{\mat{f}^\star}
\newcommand{\Sst}{\mat{S}_{s,t}}
\newcommand{\Sset}{\mathbb{S}}
\newcommand{\Yt}{Y_t}
\newcommand{\Ftot}{\mat{F}_{0:t}}
\newcommand{\Ytotone}{Y_{0:t-1}}
\newcommand{\Xtotone}{\X_{0:t-1}}
\newcommand{\Vtotone}{\mat{V}_{0:t-1}}
\newcommand{\IVar}{I_{0:t-1}^{V}}
\newcommand{\ILev}{I_{0:t-1}^{L}}
\newcommand{\IPrev}{I_{0:t-1}}
\newcommand{\Xt}{\X_t}
\newcommand{\partsXt}{\mathcal{P}(\X_t)}
\newcommand{\Xst}{\X_{s,t}}
\newcommand{\xst}{\x_{s,t}}
\newcommand{\sst}{\mat{s}_{s,t}}
\newcommand{\yt}{\y_{t}}
\newcommand{\ytval}{y_{t}}
\newcommand{\yst}{y_{s,t}}
\newcommand{\epsst}{\epsilon_{s,t}}
\newcommand{\ytstar}{y^\star_t}
\newcommand{\ytotstar}{y^\star_{0:t-1}}
\newcommand{\dint}{\text{d}}
\newcommand{\EIst}{\textsc{ei}_{s,t}}
\newcommand{\iprev}{\mat{i}}
\newcommand{\w}{\mat{w}}
\newcommand{\Utot}{\mat{U}_{0:t}}
\newcommand{\mst}{m_{s,t}}
\newcommand{\kst}{k_{s,t}}
\newcommand{\sigmast}{\sigma_{s,t}}
\newcommand{\DIopt}{\dataset^{I}_{\star}}
\newcommand{\DIntXsstar}{\dataset^{I}_{s=s^\star,t}}
\newcommand{\unit}{\mu \text{mol} \cdot \text{N} \cdot \text{L}^{-1}}
\newcommand{\ie}{i.e.\xspace}
\newcommand{\eg}{e.g.\xspace}
\newcommand{\eq}{Eq.\xspace}
\newcommand{\fig}{Fig.\xspace}
\newcommand{\iid}{i.i.d.\xspace}
\newcommand{\acro}[1]{\textsc{#1}\xspace}
\newcommand{\gptext}{\acro{gp}}
\newcommand{\rbf}{\acro{rbf}}
\newcommand{\bo}{\acro{bo}}
\newcommand{\mab}{\acro{mab}}
\newcommand{\rl}{\acro{rl}}
\newcommand{\cbo}{\acro{cbo}}
\newcommand{\DAG}{\acro{dag}}
\newcommand{\mis}{\acro{mis}}
\newcommand{\ei}{\acro{ei}}
\newcommand{\sem}{\acro{scm}}
\newcommand{\EI}{\acro{ei}}
\newcommand{\dbn}{\acro{dbn}}
\newcommand{\our}{\acro{dcbo}}
\newcommand{\dgo}{\acro{dcgo}}
\newcommand{\abo}{\acro{abo}}
\newcommand{\gap}{\acro{g}}
\newcommand{\expone}{\acro{Stat.}}
\newcommand{\expnoise}{\acro{Noisy}}
\newcommand{\expmissing}{\acro{Miss.}}
\newcommand{\expcomplex}{\acro{Multiv.}}
\newcommand{\expindep}{\acro{Ind.}}
\newcommand{\expnonstat}{\acro{NonStat.}}
\newcommand{\exprealec}{\acro{Econ.}}
\newcommand{\exprealpol}{\acro{ode}}
\title{Dynamic Causal Bayesian Optimization}
\author{%
	Virginia Aglietti\thanks{Denotes equal contribution.}\\
	University of Warwick\\
	The Alan Turing Institute\\
	\texttt{V.Aglietti@warwick.ac.uk} \\
	\And 
	Neil Dhir$^*$\\
	The Alan Turing Institute\\
	\texttt{ndhir@turing.ac.uk} \\
	\And 
	Javier Gonz\'alez\\
	Microsoft Research Cambridge\\
	\texttt{Gonzalez.Javier@microsoft.com} \\
	\And
	Theodoros Damoulas \\
	University of Warwick\\
	The Alan Turing Institute\\
	\texttt{T.Damoulas@warwick.ac.uk} \\		
}
\begin{document}

\maketitle

\begin{abstract}
This paper studies the problem of performing a sequence of optimal interventions in a causal dynamical system where both the target variable of interest and the inputs evolve over time. This problem arises in a variety of domains \eg system biology and operational research. Dynamic Causal Bayesian Optimization (\our) brings together ideas from sequential decision making, causal inference and Gaussian process (\gptext) emulation. \our is useful in scenarios where all causal effects in a graph are changing over time. At every time step \our identifies a local optimal intervention by integrating both observational and past interventional data collected from the system. We give theoretical results detailing how one can transfer interventional information across time steps and define a dynamic causal \gptext model which can be used to quantify uncertainty and find optimal interventions in practice. We demonstrate how \our identifies optimal interventions faster than competing approaches in multiple settings and applications.
\end{abstract}

\section{Introduction}
\label{sec:introduction}
\begin{wrapfigure}{r}{0.35\textwidth}
\vspace{-1.25cm}
    \centering
    \includegraphics[width=0.35\textwidth]{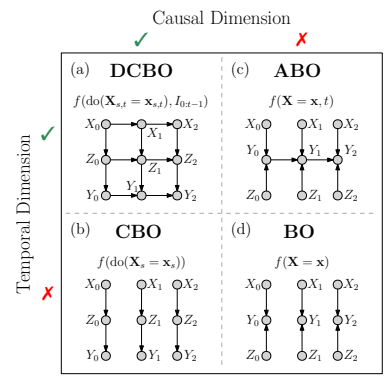}
    \caption{\DAG representation of a dynamic causal global optimisation (\dgo) problem (a) and the \DAG considered when using \cbo, \abo or \bo to address the same problem. Shaded nodes gives observed variables while the arrows represent causal effects.}
    \vspace{-4em}
    \label{fig:map_methods}
\end{wrapfigure}
% Decision making 
Solving decision making problems in a variety of domains requires understanding of cause-effect relationships in a system. This can be obtained by experimentation. However, deciding how to intervene at every point in time is particularly complex in dynamical systems, due to the evolving nature of causal effects. For instance, companies need to decide \emph{how} to allocate scarce resources across different quarters. In system biology, scientists need to identify genes to knockout at specific points in time. This paper describes a probabilistic framework that finds such optimal interventions over time.

% economic example
Focusing on a specific example, consider a setting in which $\Yt$ denotes the unemployment-rate of an economy at time $t$, $Z_t$ is the economic growth and $X_t$ the inflation rate. \fig \ref{fig:map_methods}a depicts the causal graph \citep{pearl1995causal} representing an agent's understanding of the causal links between these variables. The agent aims to determine, at each time step $t \in \{0,1,2\}$, the optimal action to perform in order to minimize the \emph{current} unemployment rate $\Yt$ while accounting for the intervention cost. 
The investigator could frame this setting as a sequence of global optimization problems and find the solutions by resorting to Causal Bayesian Optimization \citep[\cbo,][]{cbo}. \cbo extends Bayesian Optimization \citep[\bo,][]{shahriari2015taking} to cases in which the variable to optimize is part of a causal model where a sequence of interventions can be performed. However, \cbo does not account for the system's temporal evolution thus breaking the time dependency structure existing among variables (\cref{fig:map_methods}b). This will lead to sub-optimal solutions, especially in non-stationary scenarios. The same would happen when using Adaptive Bayesian Optimization \citep[\abo,][]{abo} (\cref{fig:map_methods}c) or \bo (\cref{fig:map_methods}d). \abo captures the time dependency of the objective function but neither considers the causal structure among inputs nor their temporal evolution. \bo disregards both the temporal and the causal structure. Our setting differs from both reinforcement learning (\rl) and the multi-armed bandits setting (\mab). Differently from \mab we consider interventions on continuous variables where the dynamic target variable has a non-stationary interventional distribution. In addition, compared to \rl, we do not model the state dynamics explicitly and allow the agent to perform a number of explorative interventions which do not change the underlying state of the system, before selecting the optimal action. We discuss these points further in \cref{sec:related_work}.
Dynamic Causal Bayesian Optimization\footnote{A Python implementation is available at: \url{https://github.com/neildhir/DCBO}.}, henceforth \our, accounts for both the causal relationships among input variables and the causality between inputs and outputs which might evolve over time. %This allows \our to determine how the level of traffic or the number of commercial activities should be manipulated in order minimize air pollution at every time step. 
\our integrates \cbo with dynamic Bayesian networks (\acro{dbn}), offering a novel approach for decision making under uncertainty within dynamical systems. \acro{dbn} \citep{koller2009probabilistic} are commonly used in time-series modelling and carry dependence assumptions that do not imply causation. Instead, in probabilistic causal models \citep{pearl2000causality}, which form the basis for the \cbo framework, graphs are buildt around causal information and allow us to reason about the effects of different interventions. By combining \cbo with \dbn{s}, the proposed methodology finds an optimal \emph{sequence} of interventions which accounts for the causal temporal dynamics of the system. In addition, \our takes into account past optimal interventions and transfers this information across time, thus identifying the optimal intervention faster than competing approaches and at a lower cost. We make the following contributions:
% Contributions
\begin{itemize}
\itemsep0em
\item We formulate a new class of optimization problems called Dynamic Causal Global Optimization (\dgo) where the objective functions account for the temporal causal dynamics among variables. 
\item We give theoretical results demonstrating how interventional information can be transferred across time-steps depending on the topology of the causal graph. 
\item Exploiting our theoretical results, we solve the optimization problem with \our. At every time step, \our constructs surrogate models for different intervention sets by integrating various sources of data while accounting for past interventions. 
\item We analyze \our performance in a variety of settings comparing against \cbo, \abo and \bo.
\end{itemize}
%\vspace{-0.1cm}
\subsection{Related Work}
\label{sec:related_work}
% Optimization in dynamic settings
\textbf{Dynamic Optimization} Optimization in dynamic environments has been studied in the context of evolutionary algorithms \cite{fogel1966artificial, goldberg1987nonstationary}. More recently, other optimization techniques \cite{pelta2009simple, trojanowski2009immune, de2006stochastic} have been adapted to dynamic settings, see e.g. \cite{cruz2011optimization} for a review. Focusing on \bo, the literature on dynamic settings \cite{azimi2011dynamic, bogunovic2016time, abo} is limited. 
% \citet{azimi2011dynamic} performed batch \bo in a dynamic setting where the batch sizes are dynamically determined. \citet{bogunovic2016time} introduced a \bo algorithm with bandit feedback and a reward function that varies with time. More recently, \citet{abo} developed \abo, a framework for solving \bo on continuous spaces when the function evolution follows a more complex behaviour than a simple Markov model. 
The dynamic \bo framework closest to this work is given by \citet{abo} and focuses on functions defined on continuous spaces that follow a more complex behaviour than a simple Markov model. \abo treats the inputs as fixed and not as random variables, thereby disregarding their temporal evolution and, more importantly, breaking their causal dependencies. 
% In addition, \abo requires a slow rate of change of the objective function so as to gather enough samples to learn the function evolution over space and time. All these methods tackle the dynamic dimension of the problems we address but do not account for the causal relationships among variables. 

% Causal Optimization
\textbf{Causal Optimization} Causal \bo \citep[\cbo,][]{cbo} focuses instead on the causal aspect of optimization and solves the problem of finding an optimal intervention in a \DAG by modelling the intervention functions with single \gptext{s} or a multi-task \gptext model \cite{daggp}. \cbo disregards the existence of a temporal evolution in both the inputs and the output variable, treating them as \iid overtime. While disregarding time significantly simplifies the problem, it prevents the identification of an optimal intervention at every $t$. 
%In many practical applications, the \iid assumption does not provide an adequate description for the data. Indeed, different causal methodologies %and extensions of structural causal models have been adapted to deal with longitudinal studies \cite{granger1969investigating, white2010granger, peters2012causal, hyttinen2013discovering, pfister2019invariant}. See \cite{peters2009detecting,bauer2016arrow} for a discussion of the relation between causality and the arrow of time. 

\textbf{Bandits and \rl} In the broader decision-making literature, causal relationships have been considered in the context of bandits \citep{bareinboim2015bandits, lattimore2016causal, lee2018structural, lee2019structural} and reinforcement learning \citep{lu2018deconfounding, buesing2018woulda, foerster2018counterfactual, zhang2019near, madumal2020explainable}. In these cases, actions or arms, correspond to interventions on a causal graph where there exists complex relationships between the agent’s decisions and the received rewards. While dynamic settings have been considered in acausal bandit algorithms \citep{besbes2014stochastic, villar2015multi, wu2018learning}, causal \mab have focused on static settings.  
Dynamic settings are instead considered by \rl algorithms and formalized through Markov decision processes (\acro{mdp}). In the current formulation, \our does not consider an \acro{mdp} as we do not have a notion of \emph{state} and therefore do not require an explicit model of its dynamics. The system is fully specified by the causal model. As in \bo, we focus on identifying a set of time-indexed optimal actions rather than an optimal policy. We allow the agent to perform explorative interventions that do not lead to state transitions. More importantly, differently from both \mab and \rl, \emph{we allow for the integration of both observational and interventional data}. 
An expanded discussion on the reason why \our should be used and the links between \our, \cbo, \abo and \rl is included in the supplement (\cref{sec:connections}).
\section{Background and Problem Statement}
Let random variables and values be denoted by upper-case and lower-case letters respectively. Vectors are represented shown in bold. $\DO{X}{x}$ represents an intervention on $X$ whose value is set to $x$. $p(Y \mid X=x)$ represents an observational distribution and $p(Y \mid \DO{X}{x})$ represents an interventional distribution. This is the distribution of $Y$ obtained by intervening on $X$ and fixing its value to $x$ in the data generating mechanism (see \cref{fig:dcbo_visual}), irrespective of the values of its parents. Evaluating $p(Y\mid\DO{X}{x})$ requires “real” interventions while $p(Y\mid X=x)$ only requires “observing” the system.  $\mathcal{D}^O$ and $\mathcal{D}^I$ denote observational and interventional datasets respectively.
Consider a structural causal model (\sem) defined in Definition \ref{def:scm}.
\begin{definition}{\textbf{(Structural Causal Model)} \citep[p. 203]{pearl2000causality}.}
\label{def:scm}
A structural causal model $M$ is a triple $\left\langle \mat{U},\mat{V},F) \right\rangle$ where $\mat{U}$ is a set of background variables (also called \emph{exogenous}), that are determined by factors outside of the model. $\mat{V}$ is a set $\{V_1,V_2,\ldots,V_{|\mat{V}|} \}$ of observable variables (also called \emph{endogenous}), that are determined by variables in the model (i.e., determined by variables in $\mat{U} \cup \mat{V}$). $F$ is a set of functions $\{f_1, f_2, \ldots, f_n\}$ such that each $f_i$ is a mapping from the respective domains of $U_i \cup \pa{V_i}$ to $V_i$, where $U_i \subseteq \mat{U}$ and $\pa{V_i} \subseteq \mat{V} \setminus V_i$ and the entire set $F$ forms a mapping from $\mat{U}$ to $\mat{V}$. In other words, each $ \{ f_i \in v_i \leftarrow f_i(\pa{v_i}, u_i) \mid  i = 1, \ldots, n \}$, assigns a value to $V_i$ that depends on the values of the select set of variables $(U_i \cup \pa{V_i})$.
\end{definition}
$M$ is associated to a directed acyclic graph (\DAG) $\graph$, in which each node corresponds to a variable and the directed edges point from members of $\text{Pa}(V_i)$ and $U_i$ to $V_i$. \emph{We assume $\graph$ to be known} and leave the integration with causal discovery \citep{glymour2019review} methods for future work. Within $\mat{V}$, we distinguish between three different types of variables: non-manipulative variables $\mat{C}$, which cannot be modified, treatment variables $\mat{X}$ that can be set to specific values and output variable $Y$ that represents the agent’s outcome of interest. Exploiting the rules of do-calculus \citep{pearl2000causality} one can compute $p(Y \mid \DO{X}{x})$ using observational data. This often involves evaluating intractable integrals which can be approximated by using observational data to get a Monte Carlo estimate $\widehat{p}(Y \mid \DO{X}{x}) \approx p(Y \mid \DO{X}{x})$. These approximations will be consistent when the number of samples drawn from $p(\mat{V})$ is large. 

\paragraph{Causality in time} 
One can encode the existence of causal mechanisms across time steps by explicitly representing these relationships with edges in an extended graph denoted by $\graph_{0:T}$. For instance, the \DAG in \cref{fig:map_methods}(a) can be seen as one of the \DAG{s} in \cref{fig:map_methods}(b) propagated in time. The \DAG in \cref{fig:map_methods}(a) captures both the causal structure existing across time steps and the causal mechanism within every ``time-slice'' $t$ \citep{koller2009probabilistic}. In order to reason about interventions that are implemented in a sequential manner, that is \emph{at time $t$ we decide which intervention to perform in the system} and so define:
\begin{definition}
$M_t$ is the \sem at time step $t$ defined as $M_t = \left\langle \mat{U}_{0:t}, \mat{V}_{0:t}, \Ftot\right\rangle$ where $0:t$ denotes the union of the corresponding variables or functions up to time $t$ (see \cref{fig:dcbo_visual}). $\mat{V}_{0:t}$ includes $\mat{X}_{0:t} = \mat{X}_t$, $\mat{Y}_{0:t} = Y_t$ and $\mat{C}_{0:t} = \mat{C}_{t} \cup \mat{C}_{0:t-1}$. The functions in $\Ftot$ corresponding to intervened variables are replaced by constant values while the exogenous variables related to them are excluded from $\mat{U}_{0:t}$. 
\end{definition}
\begin{definition}
$\graph_t$ is the causal graph associated to $M_t$. In $\graph_t$, the incoming edges in variables intervened at $0:t-1$ are mutilated while intervened variables are represented by deterministic nodes (squares) -- see \cref{fig:dcbo_visual}. 
\end{definition}
\begin{figure}[ht!]
    \centering
    \includegraphics[width=0.85\textwidth]{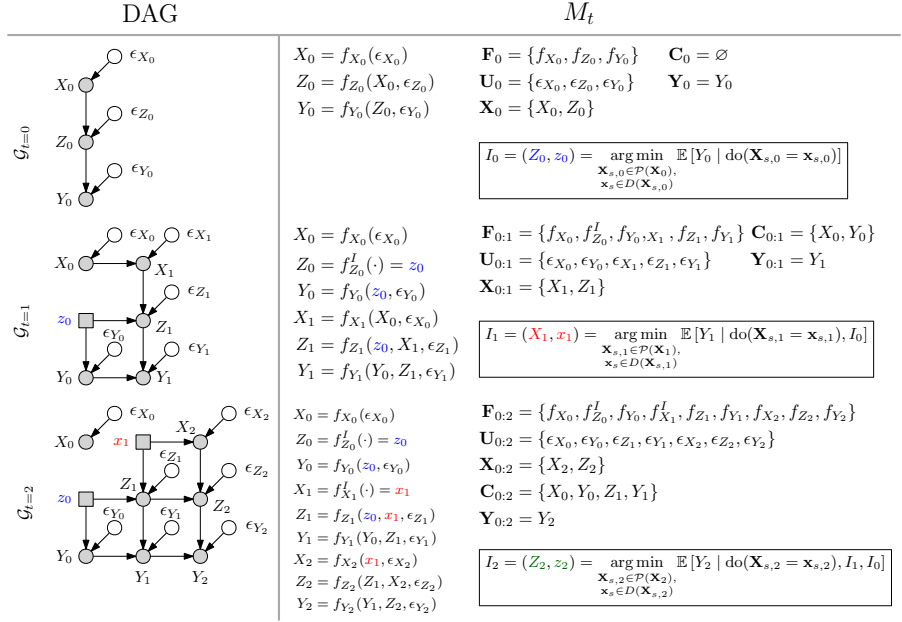}
    \caption{Structural equation models considered by \our at every time step $t \in \{0,1,2\}$. Exogenous noise variables $\epsilon_i$ are depicted here but are omitted in the remainder of the paper, to avoid clutter. For every $t$, $\graph_t$ is a mutilated version of $\graph_{t-1}$ reflecting the optimal intervention implemented in the system at $0:t-1$ which are represented by squares. The \sem functions in $\Ftot$ corresponding to the intervened variables are set to constant values. The exogenous variables that only related to the intervened variables are excluded from $U_{t}$. $\mat{C}_{0:t}$ is given by the set $\{\mat{C}_{t} \cup \mat{C}_{0:t-1} \cup \mat{Y}_{0:t-1} \cup \mat{X}_{0:t-1}\}$.
    % \virgi{make sure white/grey node is correct. Change the F and U to be $\Ftot$ and  $U_{0:t}$}
    }
    % The set of non manipulative variables at every time step is given by the union of the non manipulative variables up to time t, the previous target variables and the previous manipulative variables.}
    \label{fig:dcbo_visual}
    % \vspace{-1em}
\end{figure}

\paragraph{Dynamic Causal Global Optimization (\dgo)}
The goal of this work is to find a sequence of interventions, optimizing a target variable, \emph{at each time step}, in a causal \DAG. Given $\graph_t$ and $M_t$, at every time step $t$, we wish to optimize $\Yt$ by intervening on a subset of the manipulative variables $\Xt$. The optimal intervention variables $\mat{X}^\star_{s,t}$ and intervention levels $\xst^\star$ are given by:
\begin{equation}
    \Xst^\star,\xst^\star 
    = \argmin_{\Xst \in \partsXt, \x_s \in D(\Xst)} 
    \expectation{}{\Yt \mid \DO{\Xst}{\xst}, \mathds{1}_{t>0} \cdot \IPrev}
    \label{eq:dcgo}
\end{equation}
where $\IPrev = \bigcup_{i=0}^{t-1}\DO{\X_{s,i}^\star}{\x_{s,i}^\star}$ denotes previous interventions, $\mathds{1}_{t>0}$ is the indicator function and $\partsXt$ is the power set of $\Xt$. $D(\Xst)$ represents the interventional domain of $\Xst$. In the sequel we denote the previously intervened variables by $\IVar= \bigcup_{i=0}^{t-1}\X_{s,i}^\star$ and implemented intervention levels by 
$\ILev = \bigcup_{i=0}^{t-1}\x_{s,i}^\star$. The cost of each intervention is given by $\texttt{cost}(\Xst, \xst)$. In order to solve the problem in \cref{eq:dcgo} we make the following assumptions :
\begin{assumptions} \label{assumptions}
Denote by $\graph(t)$ the causal graph including variables at time $t$ in $\graph_{0:T}$ and let $\Ypt = \pa{Y_t}\cap Y_{0:t-1}$ be the set of variables in $\graph_{0:T}$ that are both parents of $Y_t$ and targets at previous time step. Let the set $\Ypnt = \pa{Y_t}\backslash \Ypt$ be the complement and denote by $f_{Y_t}(\cdot)$ the functional mapping for $\Yt$ in $M_t$. We make the following assumptions:
\begin{enumerate}[leftmargin=*]
    \item Invariance of causal structure: $\graph(t) = \graph(0), \forall t > 0$.
    \item Additivity of $f_{Y_t}(\cdot)$ that is $Y_t = f_{Y_t}(\pa{Y_t}) + \epsilon$ with 
    $f_{Y_t}(\pa{Y_t}) = \fyy(\Ypt) + \fyny(\Ypnt)$ where $\fyy$ and $\fyny$ are two generic unknown functions and $\epsilon \sim  \normal(0,\sigma^2)$.
    \item Absence of unobserved confounders in $\graph_{0:T}$.
\end{enumerate}
\end{assumptions}

Assumption (3) implies the absence of unobserved confounders at every time step. For instance, this is the case in \fig \ref{fig:map_methods}a. Still in this \DAG,  Assumption (2) implies $f_{\Yt}(\pa{Y_t}) = \fyy(Y_{t-1}) + \fyny(Z_t) + \epsilon_{Y_t}, \forall t>0$. Finally, Assumption (1) implies the existence of the same variables at every time step and a constant orientation of the edges among them for $t>0$. 

Notice that Assumptions \ref{assumptions} imply invariance of the causal structure \emph{within} each time-slice, \ie the structure, edges and vertices, concerning the nodes with the same time index. This means that, across time steps, both the graph and the functional relationships can change. Therefore, not only can the causal effects change significantly across time steps, but also the input dimensionality of the causal functions we model, might change. For instance, in the \DAG of \cref{fig:all_exp_dags}(c), the target function for $Y_2$ has dimensionality 3 and a function $f_{\Yt}(\cdot)$ that is completely different from the one assumed for $Y_1$ that has only two parents. We can thus model a wide variety of settings and causal effects despite this assumption. Furthermore, even though we assume an additive structure for the functional relationship on $Y$, the use of \gptext{s} allow us to have flexible models with highly non-linear causal effects across different graph structures. In the causality literature, \gptext models are well established and have shown good performances compared to parametric linear and non-linear models (see \eg \cite{silva2010gaussian, witty2020causal, zhang2012invariant}). The sum of \gptext{s} gives a flexible and computationally tractable model that can be used to capture highly non-linear causal effects while helping with interpretability \cite{duvenaud2011additive,  duvenaud2014automatic}. 

\section{Methodology}
In this section we introduce Dynamic Causal Bayesian Optimization (\our), a novel methodology addressing the problem in \cref{eq:dcgo}. We first study the correlation among objective functions for two consecutive time steps and use it to derive a recursion formula that, based on the topology of the graph, expresses the causal effects at time $t$ as a function of previously implemented interventions (see square nodes in \cref{fig:dcbo_visual}). 
% show how , at every $t$, they can be written as the sum of two terms. 
% The first term is determined by the optimal interventions implemented at previous time steps while the second term incorporates the causal dependency of the parents of $Y_t$ on the intervention set. We provide an explicit expression for this recursion for a general graph. Based on the topology of graph, we then demonstrate how the search space remains the same at every time step and a combinatorial search can be avoided. 
Exploiting these results, we develop a new surrogate model for the objective functions that can be used within a \cbo framework to find the optimal sequence of interventions. This model enables the integration of observational data and interventional data, collected at previous time-steps and interventional data collected at time $t$, thereby accelerating the identification of the current optimal intervention.
\subsection{Characterization of the time structure in a \DAG with time dependent variables}
\label{sec:recursion}
The following result provides a theoretical foundation for the dynamic causal \gptext model introduced later. In particular, it derives a recursion formula allowing us to express the objective function at time $t$ as a function of the objective functions corresponding to the optimal interventions at previous time steps. The proof is given in the appendix (\S \ref{sec:sec1_appendix}).

\begin{definition}
Consider a \DAG $\graph_{0:T}$ and the objective function $\expectation{}{\Yt \mid \DO{\Xst}{\xst}, \IPrev}$ for a generic time step $t \in \{0,\dots, T\}$. 
Denote by $\Ypt = (\pa{\Yt} \cap \Ytotone)$ the parents of $\Yt$ that are targets at previous time steps and by $\Ypnt = \pa{\Yt} \backslash \Ypt$ the remaining parents. For any $\Xst \in \partsXt$ and $\IVar \subseteq \Xtotone$ we define the following sets:
\begin{itemize}
    \item $\Apy = \Xst \cap \pa{\Yt}$ includes the variables in $\Xst$ that are parents of $\Yt$.
    \item $\Bpy = \IVar \cap \pa{\Yt}$ includes the variables in $\IVar$ that are parents of $\Yt$.
    \item $W \subset \pa{\Yt}$ such that $\pa{\Yt} = (\pa{\Yt} \cap \Ytotone) \cup \Apy \cup \Bpy \cup W$. $W$ includes variables that are parents of $\Yt$ but are not targets nor intervened variables.
\end{itemize}
\end{definition}

The values of $\IPrev$, $\Apy$, $\Bpy$ and $W$ will be denoted by $\iprev$, $\apy$, $\bpy$ and $\w$ respectively. 
\begin{theorem}{\textbf{Time operator}.} \label{theorem1}
Consider a \DAG $\graph_{0:T}$ and the related \sem satisfying Assumptions \ref{assumptions}. 
It is possible to prove that, $\forall \Xst \in \partsXt$, the intervention function $\fst(\x) = \expectation{}{Y_t \mid \DO{\Xst}{\x}, \mathds{1}_{t>0} \cdot \IPrev}$ with $\fst(\x): D(\Xst) \to \mathbb{R}$ can be written as:
\begin{align}
    \fst(\x) = \fyy(\vecfopt) + \expectation{p (\w \mid \DO{\Xst}{\x}, \iprev)}{\fyny(\apy, \bpy, \w)}
\label{eq: theorem_eq}
\end{align} 
where $\vecfopt =  \{\expectation{}{Y_{i}|\DO{\X_{s,i}^\star}{\x_{s,i}^\star}, I_{0:i-1}}\}_{Y_{i} \in \Ypt}$ that is the set of previously observed optimal targets that are parents of $\Yt$. $\fyy$ denotes the function mapping $\Ypt$ to $\Yt$ and $\fyny$ represents the function mapping $\Ypnt$ to $Y_t$.
\end{theorem}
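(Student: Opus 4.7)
The plan is to substitute the additive decomposition from Assumption (2) into the conditional expectation defining $\fst(\x)$ and analyze each resulting term separately. The argument splits naturally into the base case $t = 0$ (where $\mathds{1}_{t>0} = 0$, $\Ypt = \emptyset$, and $\vecfopt$ is vacuous, so the identity reduces to an ordinary causal expectation $\expectation{p(\w \mid \DO{\Xst}{\x})}{\fyny(\apy, \w)}$) and the inductive case $t > 0$, on which I focus below.

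For $t > 0$, I would start from $Y_t = \fyy(\Ypt) + \fyny(\Ypnt) + \epsilon$ and take $\expectation{}{\cdot \mid \DO{\Xst}{\x}, \IPrev}$ of both sides. Linearity produces three contributions: the $\epsilon$ term vanishes because Assumption (3) rules out hidden confounders that could couple $\epsilon$ to the conditioning variables and Assumption (2) specifies $\epsilon$ is zero-mean Gaussian, while the remaining two terms are analyzed using the decomposition $\pa{\Yt} = \Ypt \cup \Apy \cup \Bpy \cup W$ introduced before the theorem.

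For the $\fyny$ piece, I would observe that conditional on $\DO{\Xst}{\x}$ and $\IPrev$ the variables in $\Apy$ are deterministically pinned to $\apy$ (fixed by the current intervention) and those in $\Bpy$ to $\bpy$ (fixed by past optimal interventions, i.e.\ the square nodes in \cref{fig:dcbo_visual}); pulling them out of the expectation leaves only the stochastic residual $W$ to integrate against, yielding exactly $\expectation{p(\w \mid \DO{\Xst}{\x}, \iprev)}{\fyny(\apy, \bpy, \w)}$. For the $\fyy$ piece, I would argue that each $Y_i \in \Ypt$ was generated under the optimal intervention $\DO{\X_{s,i}^\star}{\x_{s,i}^\star}$ recorded in $\IPrev$, so by the tower property and the recursive definition of $\vecfopt$ the conditional expectation $\expectation{}{\fyy(\Ypt) \mid \DO{\Xst}{\x}, \IPrev}$ collapses to $\fyy(\vecfopt)$.

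The main obstacle is justifying this last collapse, since for a generic nonlinear $\fyy$ one has $\expectation{}{\fyy(\Ypt)} \neq \fyy(\expectation{}{\Ypt})$. I expect the resolution to rely on the interpretation of $M_t$ and $\graph_t$: conditioning on $\IPrev$ converts the past targets into the mutilated, deterministic-square-node representation of \cref{fig:dcbo_visual}, rendering each $Y_i \in \Ypt$ effectively a point summary of the previously committed optimal trajectory. Plugging in $\vecfopt$ — which is precisely the sequence of expected previous targets under those optimal interventions — then realizes the collapse; this is equivalent to a plug-in (delta-method / point-estimate) reading of the previously observed optimal targets. Once this step is secured, combining the two term-wise results yields \cref{eq: theorem_eq}.
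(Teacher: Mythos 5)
Your proposal follows essentially the same route as the paper's proof: condition on the parents of $Y_t$, invoke the additive decomposition of Assumption (2) to split the expectation into the $\fyy$ and $\fyny$ contributions, note that $\Apy$ and $\Bpy$ are pinned by current and past interventions so only $W$ remains stochastic, and collapse the $\fyy$ term to $\fyy(\vecfopt)$ by reading the previously realised optimal targets as point values (the paper phrases this as ``noticing that $p(\ypt\mid\IPrev)$ is the distribution targeted when optimizing at previous time steps,'' while it drops the non-parent interventions via Rules 1 and 2 of do-calculus and uses $\Ypt \indep \Xst$ to remove the current intervention from the law of past targets). You have also correctly identified the one genuinely delicate step --- that $\expectation{}{\fyy(\Ypt)\mid\IPrev} = \fyy(\vecfopt)$ requires a plug-in/point-mass reading of the past optimal targets rather than holding for a generic nonlinear $\fyy$ --- which is exactly how the paper resolves it.
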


\cref{eq: theorem_eq} reduces to $\expectation{p(\w|\DO{\Xst}{\x}, \iprev)}{\fyny(\apy, \bpy, \w)}$ when $\Yt$ does not depend on previous targets. This is the setting considered in \cbo that can be thus seen as a particular instance of \our.  Exploiting Assumptions (\ref{assumptions}), it is possible to further expand the second term in \cref{eq: theorem_eq} to get the following expression. A proof is given in the supplement (\S\ref{sec:sec1_appendix}).

\begin{corollary}
Given Assumptions \ref{assumptions}, we can write:
\begin{align}
  \expectation{p(\w|\DO{\Xst}{\x}, \iprev)}{\fyny(\apy, \bpy, \w)} = \expectation{p(\Utot)}{\fyny(\apy, \bpy, \{C(W)\}_{W \in \W})}  \label{eq:alter_expression}
\end{align}
where $p(\Utot)$ is the distribution for the exogenous variables up to time $t$ and $C(W)$ is given by:
\begin{align*}
    C(W) = 
    \begin{cases}
    \fW(\uW, \apw, \bpw) & \text{if} \quad \R = \emptyset \\
    \fW(\uW, \apw, \bpw, \rval) & \text{if} \quad \R \subseteq \Xst \cup \IVar\\
    \fW(\uW, \apw, \bpw, C(\R)) & \text{if} \quad \R \not\subseteq \Xst \cup \IVar
     \end{cases}
\end{align*}
where $\fW$ represents the functional mapping for $W$ in the \sem and $\uW$ is the set of exogenous variables with edges into $W$. $\apw$ and $\bpw$ are the values corresponding to $\Apw$ and $\Bpw$ which in turn represent the subset of variables in $\Xst$ and $\IVar$ that are parents of $W$. Finally $r$ is the value of $\R = \pa{W} \backslash (\Apy \cup \Bpw)$. 
\end{corollary}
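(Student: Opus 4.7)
The plan is to use the structural equations of the \sem to rewrite each $W \in \mat{W}$ as a deterministic function of the exogenous variables $\Utot$ and the intervention values, and then apply a change of variables. Specifically, by Assumption 3 (no unobserved confounders) and Definition 1, after performing the interventions $\DO{\Xst}{\x}$ and $\iprev$, every endogenous variable is a deterministic function of $\Utot$ and the intervention values. By the truncated factorization formula, $p(\w \mid \DO{\Xst}{\x}, \iprev)$ is therefore the pushforward of $p(\Utot)$ through a deterministic map $\Phi : \Utot \mapsto \w$, so that for any integrable $g$,
\begin{align*}
\expectation{p(\w \mid \DO{\Xst}{\x}, \iprev)}{g(\w)} = \expectation{p(\Utot)}{g(\Phi(\Utot))}.
\end{align*}
It remains to identify, for each $W \in \mat{W}$, the corresponding component of $\Phi$ with the recursive expression $C(W)$ from the statement.

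To do this, I would perform a case analysis on $\pa{W}$. By Definition 1, $W = \fW(\pa{W}, \uW)$. Partitioning $\pa{W}$ according to membership in $\Xst$, $\IVar$, or neither yields the three cases: (i) if $\R = \emptyset$, every endogenous parent of $W$ is intervened, so $W = \fW(\uW, \apw, \bpw)$ directly; (ii) if $\R \neq \emptyset$ but $\R \subseteq \Xst \cup \IVar$, its members take known intervention values $r$, giving $\fW(\uW, \apw, \bpw, r)$; (iii) otherwise, at least one parent in $\R$ is endogenous and non-intervened, and must be resolved by re-applying the same construction, giving $\fW(\uW, \apw, \bpw, C(\R))$. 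Acyclicity of $\graph_{0:T}$ (inherited from the underlying \sem) guarantees that this recursion terminates at ancestor nodes whose only remaining parents are exogenous or intervened. Taking $g(\w) = \fyny(\apy, \bpy, \w)$ and substituting the recursively-defined $\Phi(\Utot) = \{C(W)\}_{W \in \mat{W}}$ then yields the claim.

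The main obstacle is the first step: one must carefully justify that under Assumption 3 the interventional distribution of $\mat{W}$ is fully determined by $p(\Utot)$ through the post-intervention mechanisms. This relies on mutual independence of the exogenous variables (implicit in the Markovian \sem of Definition 1) together with the truncated factorization formula, so that no latent common cause introduces randomness outside of $\Utot$. Once that identification is established, the recursive unwinding of $C(W)$ is essentially bookkeeping dictated by the topology of $\graph_{0:T}$ and the partitioning of $\pa{W}$ relative to $\Xst$ and $\IVar$; no further probabilistic argument is needed.
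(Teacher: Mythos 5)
Your proposal is correct and follows essentially the same route as the paper: the paper likewise unrolls the structural equations recursively, conditioning on the non-intervened parents of each $W$ until the relevant distributions collapse to delta functions at intervention values or reduce to the exogenous noise, which is exactly your pushforward-through-$\Phi$ argument phrased as recursive conditioning. The case analysis on $\R$ and the appeal to acyclicity for termination match the paper's construction of $C(\cdot)$.
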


\paragraph{Examples for \cref{eq: theorem_eq}:} For the \DAG in \cref{fig:map_methods}(a), at time $t=1$ and with $\IVar = \{Z_0\}$, we have $\expectation{}{Y \mid \DO{Z_1}{z}, I_0} = \fyy(y_0^\star) + \fyny(z)$. Indeed in this case $\W = \emptyset$, $\apy = z$ and $\vecfopt = \{y_0^\star = \expectation{}{Y_0|\DO{Z_0}{z_0}}\}$. Still at $t=1$ and with $\IVar = \{Z_0\}$, the objective function for $\mat{X}_{s,t} = \{X_1\}$ can be written as $\fyy(y_0^\star) + \expectation{p(z_1|\DO{X_1}{x}, I_0)}{\fyny(z_1)}$ as $\W = \{Z_1\}$. All derivations for these expressions and alternative graphs are given in the supplement (\S\ref{sec:sec1_appendix}).

\subsection{Restricting the search space}
\label{sec:space}
The search space for the problem in \cref{eq:dcgo} grows exponentially with $|\mat{X}_t|$ thus slowing down the identification of the optimal intervention when $\graph$ includes more than a few nodes. Indeed, a naive approach of finding $\mat{X}_{s,t}^\star$ at $t=0, \dots, T$ would be to explore the $2^{|\mat{X}_t|}$ sets in $\mathcal{P}(\mat{X}_t)$ at every $t$ and keep $2^{|\mat{X}_t|}$ models for the objective functions. In the static setting, \cbo reduces the search space by exploiting the results in \cite{lee2018structural}. In particular, it identifies a subset of variables $\missets \subseteq \mathcal{P}(\X)$ worth intervening on thus reducing the size of the exploration set to $2^{|\missets|}$. 

In our dynamic setting, the objective functions change at every time step depending on the previously implemented interventions and one would need to recompute $\missets$ at every $t$. However, it is possible to show that, given Assumptions \ref{assumptions}, the search space remains constant over time. Denote by $\missets_t$ the set $\missets$ at time $t$ and let $\missets_0$ represent the set at $t=0$ which corresponds to $\missets$ computed in \cbo. For $t>0$ it is possible to prove that: 
\begin{proposition}{\textbf{\mis in time.}} If Assumptions \ref{assumptions} are satisfied, $\mathbb{M}_t = \mathbb{M}_0$ for $t>0$.
\end{proposition}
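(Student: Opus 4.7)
The plan is to reduce the claim to the observation that the minimal intervention set $\missets$ of \citet{lee2018structural} depends solely on the graph topology restricted to the current manipulative variables and the current target, and then to argue that this restricted topology is invariant in $t$ under Assumptions \ref{assumptions}. First I would recall that $\missets_0$ is constructed by (i) restricting attention to the ancestors of $Y_0$ among the manipulative variables $\Xt\big|_{t=0}$ in the within-slice graph $\graph(0)$, and (ii) enumerating the non-dominated subsets of that restricted graph according to the structural criterion in \citet{lee2018structural}. Nothing in this construction depends on the values or distributions of any exogenous variables; it depends only on the edge structure of $\graph(0)$ restricted to $\mat{X}_0 \cup \{Y_0\}$.

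Second I would describe the analogous object at time $t>0$. By Definition of $\graph_t$, any variable intervened at times $0{:}t{-}1$ appears in $\graph_t$ as a deterministic square node whose incoming edges are mutilated, and its value is fixed by the previous do-action. Consequently, within the current time-slice, such previously-intervened variables contribute only outgoing edges into $\mat{X}_t \cup \{Y_t\}$ and play a role structurally identical to the exogenous variables $\mat{U}_{0:t}$: they are not manipulable at time $t$, their values are known, and they cannot themselves be included in any candidate intervention set $\Xst \in \partsXt$. The same is true for the past targets $Y_{0:t-1}$, which by Definition of $M_t$ are not in the manipulative variables at time $t$. Therefore the subgraph relevant for MIS computation at time $t$ is the restriction of $\graph_t$ to $\mat{X}_t \cup \{Y_t\}$, together with the ancestral relationships among these nodes.

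Third, I would invoke Assumption 1 (invariance of causal structure, $\graph(t)=\graph(0)$) to conclude that the restricted within-slice subgraph on $\mat{X}_t \cup \{Y_t\}$ is isomorphic, edge-for-edge, to the subgraph on $\mat{X}_0 \cup \{Y_0\}$ used to build $\missets_0$. Applying the construction of \citet{lee2018structural} to two isomorphic restricted graphs produces the same family of minimal intervention sets (up to the canonical identification of variables across slices), hence $\missets_t = \missets_0$.

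The main obstacle, and the step that needs the most care, is justifying that previously intervened variables can be safely excluded from the MIS-relevant subgraph at time $t$ rather than treated as additional ancestors. The cleanest way to do this is to note that (a) mutilated nodes have no incoming edges, so they behave as root/exogenous inputs at time $t$, and (b) the MIS rule prunes subsets by structural dominance among currently manipulable variables, not by which exogenous inputs feed into them; adding extra root nodes that cannot be intervened on does not change which subsets of the manipulable variables are non-dominated. Once this point is made precise, the equality $\missets_t = \missets_0$ follows directly from Assumption 1 and the fact that Assumption 3 (no hidden confounding) ensures the MIS characterization of \citet{lee2018structural} is applicable at every $t$.
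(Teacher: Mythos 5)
Your argument is correct in substance but follows a genuinely different route from the paper's. The paper works directly with the semantic definition of a superfluous set: for any $\Xst \notin \missets_t$ it writes out $\expectation{}{\Yt|\DO{\Xst}{\xst}, \IPrev}$, conditions on $\Vtotone \backslash \IPrev$, and shows via d-separation in the mutilated graph $\graph_{\overline{\Sst, \IPrev, \Xst'}}$ that the superfluous variables $\Sst$ can be deleted from the intervention without changing the objective --- the two facts doing the work being that $\Sst$ has no outgoing edges into time slices $0{:}t-1$ (so no back-door paths open up) and that its front-door paths to $\Yt$ through time-$t$ variables are blocked by $\Xst'$ exactly as at $t=0$ because the within-slice graph is invariant. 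You instead argue at the level of the graphical \mis characterization of Lee and Bareinboim: the relevant within-slice subgraph is isomorphic across slices by Assumption 1, and previously intervened variables enter $\graph_t$ only as mutilated root nodes, which cannot alter which subsets of $\Xt$ are non-dominated. Both proofs hinge on the same two structural facts, and your identification of the "main obstacle" (that past interventions must not change the dominance structure) is exactly the point the paper resolves with its do-calculus computation. What the paper's route buys is that it verifies the defining property of superfluousness (equality of the conditioned objective functions) directly, so nothing is deferred; what your route buys is modularity and brevity, at the cost of asserting rather than deriving that the Lee--Bareinboim criterion applies verbatim to the conditioned, mutilated graph --- a claim that is true here but whose justification is essentially the d-separation argument the paper spells out. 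Two minor points to tighten: the subgraph relevant to the \mis computation is the full within-slice graph (non-manipulative mediators such as a chain $X_t \to W_t \to \Yt$ matter for ancestry), not just the restriction to $\Xt \cup \{\Yt\}$; and you should state explicitly that any directed path from a time-$t$ variable to $\Yt$ must remain inside slice $t$, which is what licenses ignoring the past slices entirely.
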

\subsection{Dynamic Causal \gptext model}
Here we introduce the Dynamic Causal \gptext model that is used as a surrogate model for the objective functions in \cref{eq:dcgo}. The prior parameters are constructed by exploiting the recursion in \cref{eq: theorem_eq}. At each time step $t$, the agent explores the sets in $\mathbb{M}_t \subseteq \mathcal{P}(\X_t)$ by selecting the next intervention to be the one maximizing a given acquisition function. The \our algorithm is shown in \cref{alg:dcbo_alg}.

\paragraph{Prior Surrogate Model}
\label{sec:dc_model}
At each time step $t$ and for each $\Xst \in \missets_t$, we place a \gptext prior on the objective function $\fst(\x) =  \expectation{}{\Yt|\DO{\Xst}{\x}, \mathds{1}_{t>0} \cdot \IPrev}$. We construct the prior parameters exploiting the recursive expression in \cref{eq: theorem_eq}: 
\begin{align*}
 \fst(\x) & \sim \mathcal{GP}(\mst(\x), \kst(\x, \x')) \text{ where } 
 \\
 \mst(\x) &= \expectation{}{\fyy(\vecfopt) +
 \widehat{\mathbb{E}}_{}[\fyny(\apy, \bpy, \w)]} \\
 \kst(\x, \x') &= k_{\rbf}(\x, \x') + \sigmast(\x)\sigmast(\x') \text{ with }\\
 \sigmast(\x) &= \sqrt{\mathbb{V}[\fyy(\vecfopt) + \hat{\mathbb{E}}_{} \left [\fyny(\apy, \bpy, \w) \right ]}
\end{align*}
and $k_{\rbf}(\x, \x') \coloneqq  \exp(-\frac{||\x -\x'||^2}{2l^2})$ represents the radial basis function kernel \cite{rasmussen2003gaussian}. We have it that  
\begin{equation*}
\hat{\mathbb{E}}_{} \left [\fyny(\apy, \bpy, \w) \right ] = \hat{\mathbb{E}}_{p(\w|\DO{\Xst}{\x}, \iprev)} \left [\fyny(\apy, \bpy, \w) \right]
\end{equation*}
represents the expected value of $\fyny(\apy, \bpy, \w)$ with respect to $p(\w \mid \DO{\Xst}{\x}, \iprev)$ which is estimated via the do-calculus using observational data. The outer expectation in $\mst(\x)$ and the variance in $\sigmast(\x)$ are computed with respect to $p(\fyy, \fyny)$ which is also estimated using observational data. In this work we model $\fyy$, $\fyny$ and all functions in the \sem by independent \gptext{s}. 

Both $\mst(\x)$ and $\sigmast(\x)$ can be equivalently written by exploiting the equivalence in \cref{eq:alter_expression}. In both cases, this prior construction allows the integration of three different types of data: observational data, interventional data collected at time $t$ and the optimal interventional data points collected in the past. The former is used to estimate the \sem model and $p(\w \mid \DO{\Xst}{\x}, \iprev)$ via the rules of do-calculus. The optimal interventional data points at $0:t-1$ determine the shift $\fyy(\vecfopt)$ while the interventional data collected at time $t$ are used to update the prior distribution on $\fst(\x)$. Similar prior constructions were previously considered in static settings \cite{cbo, daggp} where only observational and interventional data at the current time step were used. The additional shift term appears here as there exists causal dynamics in the target variables and the objective function is affected by previous decisions. \cref{fig:toy_example} in the appendix shows a synthetic example in which accounting for the dynamic aspect in the prior formulation leads to a more accurate \gptext posterior compared to the baselines, especially when the the optimum location changes across time steps. 

\begin{wrapfigure}{R}{0.5\textwidth}
    \vspace{-1.5em}
    \begin{minipage}{.55\textwidth}
    \scalebox{0.9}{
        \begin{algorithm}[H]
        	\SetKwInOut{Input}{input}
        	\SetKwInOut{Output}{output}
        	\KwData{$\dataset^O$, $\{\DIntXstzero\}_{s \in \{0,\dots, |\missets_0|\}}$, $\graph_{0:T}$, 
        	$H$. }
        	\KwResult{Optimal intervention path $\{\Xst^\star, \xst^\star, y_t^\star\}_{t = 1}^T$}
        	\textbf{Initialise}: $\missets$, $\dataset^I_0$ and initial optimal $\DIopt = \emptyset$.\\
        	\For{$t = 0, \dots, T$}{ 
        	1. Initialise dynamic causal \gptext models for all $\Xst \in \missets_t$ using $\mathcal{D}^I_{\star,t-1}$ if $t>0$. \\
        	2. Initialise interventional dataset $\{\DIntXst\}_{s \in \{0,\dots, |\missets_t|\}}$ \\
        	\For{$h = 1, \dots, H$}{
        	1. Compute $\EIst(\x)$ for each $\Xst \in \missets_t$.\\
        	2. Obtain $(s^\star, \alpha^\star)$ \\
        	3. Intervene and augment $\DIntXsstar$ \\
        	4. Update posterior for $f_{s=s^\star, t}$
            }
            3. Return the optimal intervention $(\Xst^\star, \xst^\star)$\\
            4. Append optimal interventional data $\mathcal{D}^I_{\star,t} = \mathcal{D}^I_{\star,t-1} \cup ((\Xst^\star, \xst^\star), y^\star_t)$
        	}
        	\caption{\our}
        	\label{alg:dcbo_alg}
        \end{algorithm}
    }
    \end{minipage}
    \vspace{-3em}
\end{wrapfigure}

\paragraph{Likelihood}
Let $\DIntXst = (\DIntXstX, \DIntXstY)$ be the set of interventional data points collected for $\Xst$ with $\DIntXstX$ being a vector of intervention values and $\DIntXstY$ representing the corresponding vector of observed target values. As in standard \bo we assume each $\yst$ in $\DIntXstY$ to be a noisy observation of the function $\fst(\x)$ that is $\yst(\x) = \fst(\x) + \epsst$ with $\epsst \sim \normal(0, \sigma^2)$ for $s \in \{1, \dots, |\missets_t|\}$ and $t \in \{0,\dots, T\}$. In compact form, the joint likelihood function for $\DIntXst$ is $p(\DIntXstY \mid \fst, \sigma^2) = \normal(\fst(\DIntXstX),\sigma^2 \mat{I})$. 

\paragraph{Acquisition Function}
Given our surrogate models at time $t$, the agent selects the interventions to implement by solving a Causal Bayesian Optimization problem \cite{cbo}. The agent explores the sets in $\missets_t$ and decides where to intervene by maximizing the Causal Expected Improvement (\ei). Denote by $\ytstar$ the optimal observed target value in $\{\DIntXstY\}_{s=1}^{|\missets_t|}$ that is the optimal observed target across all intervention sets at time $t$. The Causal \EI is given by 
\begin{equation*}
\EIst(\x) = \expectation{p(\yst)}{\text{max}(\yst-\ytstar, 0)}/\texttt{cost}(\Xst, \x_{s,t}).     
\end{equation*}
Let $\alpha_1, \dots, \alpha_{|\missets_t|}$ be solutions of the optimization of $\EIst(\x)$ for each set in $\missets_t$ and $\alpha^\star := \max \{\alpha_1, \dots, \alpha_{|\missets_t|}\}$. The next best intervention to explore at time $t$ is given by $s^\star = \argmax_{s \in \{1, \cdots, |\missets_t|\}} \alpha_s.$ Therefore, the set-value pair to intervene on is $(s^\star, \alpha^\star)$. At every $t$, the agent implement $H$ \emph{explorative} interventions in the system which are selected by maximizing the Causal \ei. 
Once the budget $H$ is exhausted, the agent implements what we call the \emph{decision} intervention $I_t$, that is the optimal intervention found at the current time step, and move forward to a new optimization at $t+1$ carrying the information in $\ytotstar$. The parameter $H$ determines the level of exploration of the system and acts as a budget for the \cbo algorithm. Its value is determined by the agent and is generally problem specific.

\paragraph{Posterior Surrogate Model}
For any set $\Xst \in \missets_t$, the posterior distribution $p(\fst \mid \DIntXst)$ can be derived analytically via standard \gptext updates. $p(\fst \mid \DIntXst)$ will also be a \gptext with parameters 
\begin{align*}
\mst(\x \mid \DIntXst) &= \mst(\x) + \kst(\x, \DIntXstX)[\kst(\DIntXstX, \DIntXstX) + \sigma^2\mat{I}](\DIntXstY - \mst(\DIntXstX) \text{ and }\\ 
\kst(\x, \x' \mid \DIntXst) &= \kst(\x, \x') - \kst(\x, \DIntXstX)[\kst(\DIntXstX, \DIntXstX) + \sigma^2\mat{I}]\kst(\DIntXstX, \x').    
\end{align*}

\section{Experiments}
\label{sec:experiments}
We evaluate the performance of \our in a variety of synthetic and real world settings with \DAG{s} given in \cref{fig:all_exp_dags}. We first run the algorithm for a stationary setting where both the graph structure and the \sem do not change over time (\expone). We then consider a scenario characterised by increased observation noise (\expnoise) for the manipulative variables and a settings where observational data are missing at some time steps (\expmissing). Still assuming stationarity, we then test the algorithm in a \DAG where there are multivariate interventions in $\missets_t$ (\expcomplex). Finally, we run \our for a non-stationary graph where both the \sem and the \DAG change over time (\expnonstat). To conclude, we use \our to optimize the unemployment rate of a closed economy (\DAG in \cref{fig:dag_econ}, \exprealec) and to find the optimal intervention in a system of ordinary differential equation modelling a real predator-prey system (\DAG in \cref{fig:dag_ode}, \exprealpol). We provide a discussion on the applicability of \our to real-world problems in \cref{sec:appl_rw} of the supplement together with all implementation details. 

\textbf{Baselines} We compare against the algorithms in \cref{fig:map_methods}. Note that, by constructions, \abo and \bo intervene on all manipulative variables while \our and \cbo explore only $\missets_t$ at every $t$. In addition, both \our and \abo reduce to \cbo and \bo at the first time step. We assume the availability of an observational dataset $\dataset^O$ and set a unit intervention cost for all variables. 

\textbf{Performance metric} We run all experiments for $10$ replicates and show the average convergence path at every time step. We then compute the values of a modified ``gap'' metric\footnote{This metric is a modified version of the one used in \cite{huang2006global}.} across time steps and with standard errors across replicates. The metric is defined as
\begin{equation}
\gap_t = \left[\frac{y(\xst^\star) - y(\x_{\text{init}})}{y^{\star} - y(\x_{\text{init}})} + \frac{H - H(\xst^\star)}{H} \right] \Big/ \left(1 + \frac{H - 1}{H} \right)
\end{equation}
where $y(\cdot)$ represents the evaluation of the objective function, $y^\star$ is the global minimum, and $\x_{\text{init}}$ and $\xst^\star$ are the first and best evaluated point, respectively. The term $\frac{H - H(\xst^\star)}{H}$ with $H(\xst^\star)$ denoting the number of explorative trials needed to reach $\xst^\star$ captures the speed of the optimization. This term is equal to zero when the algorithm is not converged and equal to $(H-1)/H$ when the algorithm converges at the first trial. We have $0 \leq \gap_t \leq 1$ with higher values denoting better performances. For each method we also show the average percentage of replicates where the optimal intervention set $\Xst^\star$ is identified.

\begin{figure}[ht!]
    \centering
    \begin{subfigure}[t]{0.2\textwidth}
        \centering
        \includegraphics[width=\textwidth]{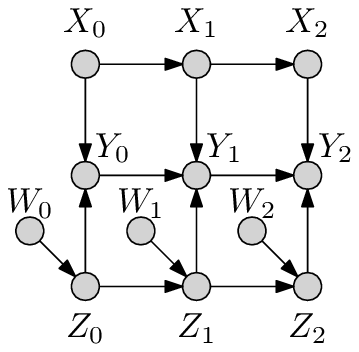}
        \caption{\expcomplex \label{fig:dag_instrument}}
    \end{subfigure}%
    \hspace*{\fill}
    \begin{subfigure}[t]{0.2\textwidth}
        \centering
        \includegraphics[width=0.84\textwidth]{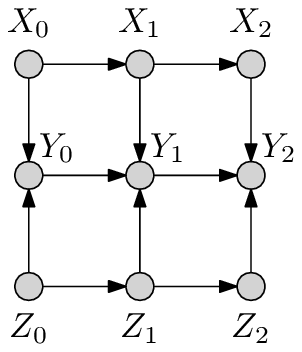}
        \caption{\expindep \label{fig:dag_independent}}
    \end{subfigure}%
    \hspace*{\fill}
    \begin{subfigure}[t]{0.2\textwidth}
        \centering
        \includegraphics[width=0.875\textwidth]{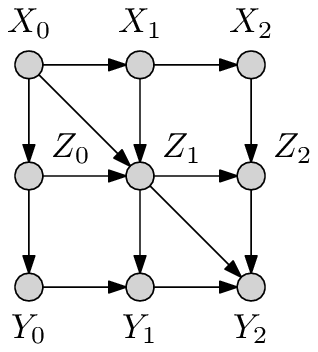}
        \caption{\expnonstat \label{fig:dag_nonstat}}
    \end{subfigure}%    
    % \\
    \begin{subfigure}[t]{0.2\textwidth}
        \centering
        \includegraphics[width=1.07\textwidth]{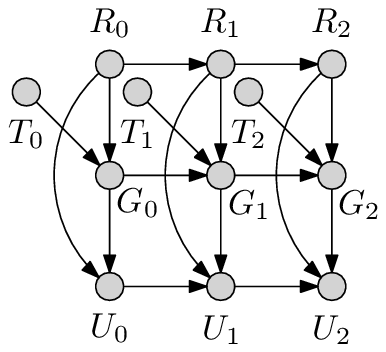}
        \caption{\exprealec \label{fig:dag_econ}}
    \end{subfigure}%
    \begin{subfigure}[t]{0.2\textwidth}
        \centering
        \includegraphics[width=0.75\textwidth]{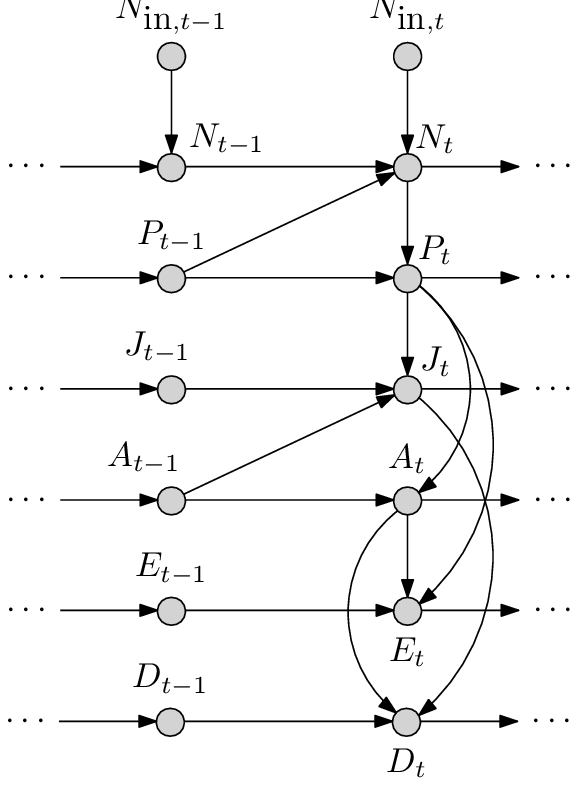}
        \caption{\exprealpol. \label{fig:dag_ode}}
    \end{subfigure}%
    \caption{\DAG{s} used in the experimental sections for the real (\cref{sec:real_experiments}) and synthetic data (\cref{sec:synthetic}).}
    \label{fig:all_exp_dags}
    % \vspace{-1em}
\end{figure}

\subsection{Synthetic Experiments}
\label{sec:synthetic}

\textbf{Stationary \DAG and \sem (\expone)} 
We run the algorithms for the \DAG in \cref{fig:map_methods}(a) with $T = 3$ and $N=10$. \emph{For $t>0$, \our converges to the optimal value faster than competing approaches} (see \cref{fig:toy_example} in the supplement, right panel, 3\textsuperscript{rd} row). \our identifies the optimal intervention set in $93\%$ of the replicates (\cref{tab:table_sets}) and reaches the highest average gap metric (\cref{tab:table_gaps}). In this experiment the location of the optimum changes significantly both in terms of optimal set and intervention value when going from $t=0$ to $t=1$. This information is incorporated by \our through the prior dependency on $\ytotstar$. In addition, \abo performance improves over time as it accumulates interventional data and uses them to fit the temporal dimension of the surrogate model. This benefits \abo in a stationary setting but might penalise it in non-stationary setting where the objective functions change significantly. 

\textbf{Noisy manipulative variables (\expnoise):} \emph{The benefit of using \our becomes more apparent when the manipulative variables observations are noisy} while the evolution of the target variable is more accurately detected.
In this case both the convergence of \our and \cbo are slowed down by noisy observations which are diluting the information provided by the do-calculus making the priors less informative. However, the \our prior dependency on $\ytotstar$ allows it to correctly identify the shift in the target variable thus improving the prior accuracy and the speed-up of the algorithm (\cref{fig:exp_noise}).

\begin{figure*}
\centering
\includegraphics[width=\linewidth]{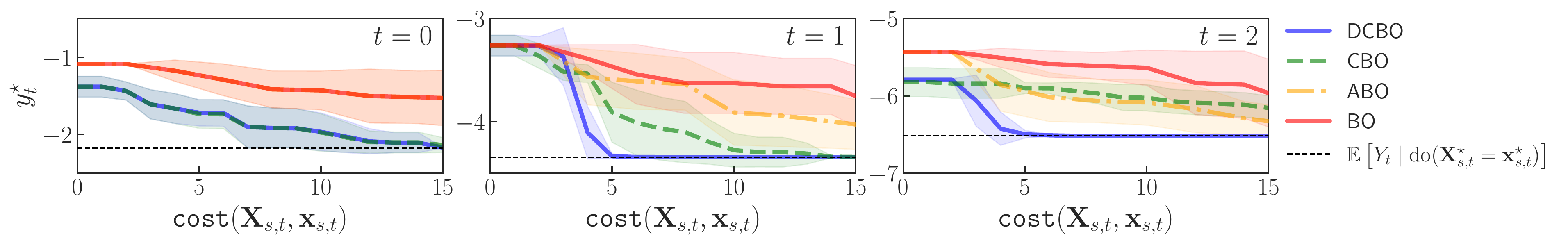}
\caption{ Experiment \expnoise. Convergence of \our and competing methods across replicates. The dashed black line (- - -) gives the optimal outcome $y^*_t, \forall t$. Shaded areas are $\pm$ one standard deviation.}
\label{fig:exp_noise}
\end{figure*}

\textbf{Missing observational data (\expmissing)}
\emph{Incorporating dynamic information in the surrogate model allows us to efficiently optimise a target variable even in setting where observational data are missing}. We consider the \DAG in \cref{fig:map_methods}(a) with $T=6$, $N=10$ for the first three time steps and $N=0$ afterwards. \our uses the observational distributions learned with data from the first three time steps to construct the prior for $t>3$. On the contrary, \cbo uses the standard prior for $t>3$. In this setting \our consistently outperforms \cbo at every time step. However, \abo performance improves over time and outperforms \our starting from $t=4$ due to its ability to exploit all interventional data collected over time (see \cref{fig:missing_exp} in the supplement). 

\textbf{Multivariate intervention sets (\expcomplex)} 
\emph{When the optimal intervention set is multivariate, both \our and \cbo convergence speed worsen}. For instance, for the \DAG in \cref{fig:dag_instrument}, $|\missets| = 5$ thus both \cbo and \our will have to perform more explorative interventions before finding the optimum. At the same time, \abo and \bo consider interventions only on $\{W_t, X_t, Z_t\}, \forall t$ and need to explore an even higher intervention space. The performance of all methods decrease in this case (\cref{tab:table_gaps}) but \our still identifies the optimal intervention set in $93\%$ of the replicates (\cref{tab:table_sets}).

\textbf{Independent manipulative variables (\expindep):} \emph{Having to explore multiple intervention sets significantly penalises \our and \cbo when there is no causal relationship among manipulative variables} which are also the only parents of the target. This is the case for the \DAG in \cref{fig:dag_independent} where the optimal intervention is $\{X_t, Z_t\}$ at every time step. In this case, exploring $\missets$ and propagating uncertainty in the causal prior slows down \our convergence and decreases both its performance (\cref{tab:table_gaps}) and capability to identify the optimal intervention set (\cref{tab:table_sets}). 

\textbf{Non-stationary \DAG and \sem (\expnonstat):} \emph{\our outperforms all approaches in non-stationary settings where both the \DAG and the \sem change overtime} -- see \cref{fig:dag_nonstat}. Indeed, \our can timely incorporate changes in the system via the dynamic causal prior construction while \cbo, \bo and \abo need to perform several interventions before accurately learning the new objective functions.
\begin{table}[hb!]
% \vspace{-1em}
\centering
\caption{Average $\gap_t$ across 10 replicates and time steps. See \cref{fig:map_methods} for a summary of the baselines. Higher values are better. The best result for each experiment in bold. Standard errors in brackets. }
\label{tab:table_gaps}
\resizebox{\columnwidth}{!}{
%@{\extracolsep{4pt}}@{\kern\tabcolsep}
\begin{tabular}{lcccccccc}
\toprule
&  \multicolumn{6}{c}{Synthetic data} & \multicolumn{2}{c}{Real data} \\
\cmidrule(lr){2-7} 
\cmidrule(lr){8-9}
& \expone & \expmissing & \expnoise &  \expcomplex & \expindep & \expnonstat & \exprealec & \exprealpol \\
\cmidrule(lr){2-7} 
\cmidrule(lr){8-9}
\multirow{2}{*}{$\our$} & \textbf{0.88}   &  \textbf{0.84} & \textbf{0.75} & \textbf{0.49} & 0.48 & \textbf{0.69}  & \textbf{0.64} & \textbf{0.67}  \\
          & (0.00) & (0.01)  & (0.00) & (0.01) & (0.04) & (0.00) & (0.01) & (0.00)\\
\multirow{2}{*}{$\cbo$} & 0.70  & 0.70 &  0.51  & 0.48 & 0.47  & 0.61 & 0.61 & 0.65\\
          & (0.01) & (0.02) & (0.02) & (0.09) & (0.07) & (0.00) & (0.01) & (0.00)\\
\multirow{2}{*}{$\abo$} & 0.56  & 0.49 & 0.49 & 0.39 & \textbf{0.54} & 0.38 & 0.57  & 0.48\\
          & (0.01) & (0.02) & (0.04) & (0.21) & (0.01) & (0.02) & (0.02) & (0.01)\\
\multirow{2}{*}{$\bo$} & 0.54  & 0.48 & 0.38 & 0.35 & 0.50 & 0.38 & 0.50 & 0.44\\
          & (0.02) & (0.03) & (0.05) & (0.08) & (0.01) & (0.03) & (0.01) & (0.03)\\
\bottomrule
\end{tabular}
}
% \vspace{-1em}
\end{table}

\begin{table}
% \vspace{-1.5em}
\centering
\caption{Average \% of replicates across time steps for which $\Xst^\star$ is identified. See \cref{fig:map_methods} for a summary of the baselines. Higher values are better. The best result for each experiment in bold. }
\label{tab:table_sets}
\resizebox{\columnwidth}{!}{
\begin{tabular}{lcccccccc}
\toprule
&  \multicolumn{6}{c}{Synthetic data} & \multicolumn{2}{c}{Real data} \\
\cmidrule(lr){2-7} 
\cmidrule(lr){8-9}
& \expone & \expmissing & \expnoise &  \expcomplex & \expindep & \expnonstat & \exprealec & \exprealpol \\
\cmidrule(lr){2-7} 
\cmidrule(lr){8-9}
\multirow{1}{*}{$\our$} & \textbf{93.00} & 58.00 & \textbf{100.00} & \textbf{93.00} & 93.00 & \textbf{100.00} & 86.67 & \textbf{33.3}\\
\multirow{1}{*}{$\cbo$} & 90.00 & \textbf{85.00} & 90.00 & 90.0 & 90.00 & \textbf{100.00} & \textbf{93.33} & \textbf{33.3}\\
\multirow{1}{*}{$\abo$} & 0.00 & 0.00 & 0.00 & 0.00 & \textbf{100.00} & 0.00 & 66.67 & 0.00 \\
\multirow{1}{*}{$\bo$} & 0.00 & 0.00 & 0.00 & 0.00 & \textbf{100.00} & 0.00 & 66.67 & 0.00 \\
\bottomrule
\end{tabular}
}
% \vspace{-1em}
\end{table}

\subsection{Real experiments}
\label{sec:real_experiments}
\textbf{Real-World Economic data (\exprealec)} We use \our to minimize the unemployment rate $U_t$ of a closed economy. We consider its causal relationships with economic growth ($G_t$), inflation rate ($R_t$) and fiscal policy ($T_t$)\footnote{The causality between economic variables is oversimplified in this example thus the results cannot be used to guide public policy and are only meant to showcase how \our can be used within a real application.}. Inspired by the economic example in \cite{huang2019causal} we consider the \DAG in \cref{fig:dag_econ} where $R_t$ and $T_t$ are considered manipulative variables we need to intervene on to minimize $\log(U_t)$ at every time step. Time series data for 10 countries\footnote{Data were downloaded from \texttt{https://www.data.oecd.org/ [Accessed: 01/04/2021]}. All details in the supplement.} are used to construct a non-parametric simulator and to compute the causal prior for both \our and \cbo. \our convergences to the optimal intervention faster than competing approaches (see \cref{tab:table_gaps} and \cref{fig:economy_example} in the appendix). The optimal sequence of interventions found in this experiment is equal to $\{(T_0, R_0) = (9.38, -2.00), (T_1, R_1) = (0.53, 6.00), (T_2) = (0.012)\}$ which is consistent with domain knowledge. 

\textbf{Planktonic predator–prey community in a chemostat (\exprealpol)} We investigate a biological system in which two species interact, one as a predator and the other as prey, with the goal of identifying the intervention reducing the concentration of dead animals in the chemostat – see $D_t$ in \cref{fig:dag_ode}. We use the system of ordinary differential equations (\acro{ode}) given by \citep{blasius2020long} as our \sem and construct the \DAG by rolling out the temporal variable dependencies in the \acro{ode} while removing graph cycles. Observational data are provided in \citep{blasius2020long} and are use to compute the dynamic causal prior. \our outperforms competing methods in term of average gap metric and identifies the optimum faster (\cref{tab:table_gaps}). Additional details can be found in the supplement (\cref{sec:ode_details}).
\section{Conclusions}
 We consider the problem of finding a sequence of optimal interventions in a causal graph where causal temporal dependencies exist between variables. We propose the Dynamic Causal Bayesian Optimization (\our) algorithm which finds the optimal intervention at every time step by intervening in the system according to a causal acquisition function. Importantly, for each possible intervention we propose to use a surrogate model that incorporates information from previous interventions implemented in the system. This is constructed by exploiting theoretical results establishing the correlation structure among objective functions for two consecutive time steps as a function of the topology of the causal graph. We discuss the \our performance in a variety of setting characterized by different \DAG properties and stationarity assumptions. Future work will focus on extending our theoretical results to more general \DAG structures thus allowing for unobserved confounders and a changing \DAG topology within each time step. In addition, we will work on combining the proposed framework with a causal discovery algorithm so as to account for uncertainty in the graph structure.
 
 \section*{Acknowledgements}
This work was supported by the EPSRC grant EP/L016710/1, The Alan Turing Institute under EPSRC grant EP/N510129/1, the Defence and Security Programme at The Alan Turing Institute, funded by the UK Government and the Lloyds Register Foundation programme on Data Centric Engineering through the London Air Quality project. TD acknowledges support from UKRI Turing AI Fellowship (EP/V02678X/1).

\newpage
\bibliographystyle{icml2021}
\bibliography{references}

\newpage
\appendixwithtoc

\newpage

\section{Nomenclature}
\label{sec:nomenclature}
\begin{table}[htbp]
% \caption{Summary of notation used in the paper.}
\begin{center}% used the environment to augment the vertical space
% between the caption and the table
% \resizebox{\columnwidth}{!}{
% \setlength{\tabcolsep}{10pt} % Default value: 6pt
\renewcommand{\arraystretch}{1.3} % Default value: 1
\begin{tabular}{ccl}
% \toprule
\textbf{Symbol} &  & \textbf{Description} \\
% \toprule
 & & \\
$\mat{V}_{t}$ & & Set of observable variables at time $t$ \\
$\mat{V}_{0:T}$ & & Union of observable variables at time $t = 0,\dots, T$  \\
$\mat{X}_t$ & & Manipulative variables at time $t$ \\
$Y_t$ & & Target variable at time $t$ \\
$\mathcal{P}(\mat{X}_t)$ &  & Power set of $\mat{X}_t$ \\
$\mathbb{M}_t$ &  & Set of \mis sets at time $t$   \\
$\mat{X}_{s,t}$ &  & $s$-th intervention set at time $t$ \\
$\dataset$ &  & Observational dataset $\{\mat{V}^i_{0:T}\}_{i=1}^N$\\
$N$ &  &  Number of observational data points collected from the system \\
$\DIntXst$ &  & Interventional data points  collected for the intervention set $\Xst$\\
$\DIntXstX$ & & Vector of interventional values \\
$\DIntXstY$ & & Vector of target values obtained by intervening on $\Xst$ at $\DIntXstX$ \\
$H$ &  &  Maximum number of explorative interventions an agent can conduct at every $t$ \\
$I_{0:t-1}$ & & Decision Interventions at time step $0$ to $t-1$\\
$\fst$ &  & Objective function for the set $\Xst$
\\
$\mst$ &  & Prior mean function of \gptext on $\fst$
\\
$\kst$ &  & Prior kernel function of \gptext on $\fst$ \\
$\mst(\cdot\mid \DIntXst)$ &  & Posterior mean function for \gptext on $\fst$ \\
$\kst(\cdot, \cdot\mid\DIntXst)$ &  & Posterior covariance function for \gptext on $\fst$ 
\\
% \bottomrule
\end{tabular}
% }
\end{center}
\label{tab:TableOfNotation}
\end{table}

\newpage

\section{Characterization of the time structure in a \DAG with time dependent variables}
\label{sec:sec1_appendix}

In this section we give the proof for Theorem 1 in the main text. Consider the objective function $\expectation{}{\Yt|\DO{\Xst}{\xst}, \IPrev}$ and define the following sets:
\vspace{-\topsep}

\begin{itemize}
    \setlength\itemsep{1em}
    \item $\pa{\Yt} = \Ypt \cup \Ypnt$ with $\Ypt = \pa{\Yt} \cap \Ytotone$ denoting the parents of $\Yt$ that are target variables at previous time steps and $\Ypnt = \pa{\Yt} \backslash \Ypt$ including the parents of $\Yt$ that are not target variables.
    % \vspace{-0.7cm}
    \item For any set $\Xst \in \partsXt$, $\Apy = \Xst \cap \pa{\Yt}$ includes the variables in $\Xst$ that are parents of $\Yt$ while $\Anpy = \Xst  \backslash \Apy$ so that $\Xst = \Apy \cup \Anpy$.
    % \vspace{-0.7cm}
    \item For any set $\IVar \subseteq \Xtotone$, $\Bpy = \IVar \cap \pa{\Yt}$ includes the variables in $\IVar$ that are parents of $\Yt$ and $\Bnpy = \IVar  \backslash \Bpy$ so that $\IVar  = \Bpy \cup \Bnpy$.
    % \vspace{-0.7cm}
    \item For any two sets $\Xst \in \pa{\Yt}$ and $\IVar \subseteq \Xtotone$, $\W$ is a set such that $\pa{\Yt} = \Ypt \cup \Apy \cup \Bpy \cup \W$. This means that $\W$ includes those variables that are parents of $\Yt$ but are nor target at previous time steps nor intervened variables.
\end{itemize}

In the following proof the values of $\IVar$, $\Apy$, $\Bpy$ and $W$ are denoted by $\iprev$, $\apy$, $\bpy$ and $\w$ respectively. The values of $\Ypt$, $\Anpy$ and $\Bnpy$ are instead represented by $\ypt$, $\anpy$ and $\bnpy$. Finally, $\fyy$ and $\fyny$ are the functions in the \sem for $\Yt$ (see Assumptions (1) in the main text). 

\paragraph{\textit{Proof of Theorem 1}}
%Consider the objective functions $\expectation{}{\Yt|\DO{\Xst}{\xst}, \Iprev}$. 
Under Assumptions 1 we can write :
\begin{align}
    \expectation{}{\Yt|\DO{\Xst}{\xst}, \IPrev} 
     &= 
    \int \ytval p(\ytval|\DO{\Xst}{\xst}, \IPrev)\dint \ytval \nonumber
    \\& = 
    \int \cdots \int \ytval p(\ytval|\DO{\Apy}{\apy},\DO{\Anpy}{\anpy},  \Bpy, \Bnpy, \ypt, \w) \nonumber
    \\& 
    \;\;\;\;\;\;\;\;\;\;\;\;\;\;\;\; 
    \times p(\ypt, \w|\DO{\Xst}{\xst}, \IPrev)\dint \ytval \dint \ypt \dint \w \nonumber
    % \\ \text{Rule 2 and Rule 1 of do-calculus} \longrightarrow \nonumber
    \\&= \Big/   \text{Rule 2 and Rule 1 of do-calculus}    \Big/ \nonumber
    \\&=
    \int \cdots \int \ytval p(\ytval|\DO{\Apy}{\apy}, \Bpy, \ypt, \w) \nonumber
    \\&
    \;\;\;\;\;\;\;\;\;\;\;\;\;\;\;\; 
    \times p(\ypt, \w|\DO{\Xst}{\xst}, \IPrev) \dint \ytval\dint\ypt \dint \w \label{eq:inter_1}
    \\&= 
    \int \cdots \int \expectation{}{\Yt|\DO{\Apy}{\apy}, \Bpy, \ypt, \w} \nonumber
    \\&
    \;\;\;\;\;\;\;\;\;\;\;\;\;\;\;\; 
    \times p(\ypt, \w|\DO{\Xst}{\xst}, \IPrev)\dint \ypt \dint \w \nonumber
    % \\ \text{Assumption (2)} \longrightarrow \nonumber
    \\&= \Big/   \text{Assumption (2)}    \Big/ \nonumber
    \\&= \int \cdots \int \fyy(\ypt) + \fyny(\apy, \bpy, \w) \nonumber
    \\& 
    \;\;\;\;\;\;\;\;\;\;\;\;\;\;\;\; 
    \times p(\ypt, \w|\DO{\Xst}{\xst}, \IPrev)\dint\ypt \dint \w \label{eq:inter_2}
    \\&= \int \cdots \int \fyy(\ypt) p(\ypt, \w|\DO{\Xst}{\xst}, \IPrev)\dint \ypt \dint \w \nonumber
    \\ &+ \int \cdots \int \fyny(\apy, \bpy, \w) p(\ypt, \w|\DO{\Xst}{\xst}, \IPrev)\dint \ypt \dint \w \nonumber
    \end{align}
\onecolumn

\begin{align}
    &= \int  \fyy(\ypt) p(\ypt|\DO{\Xst}{\xst}, \IPrev)\dint\ypt 
    \\& + \int \fyny(\apy, \bpy, \w) p(\w|\DO{\Xst}{\xst}, \IPrev)\dint \w  \nonumber
    % \text{Time assumption} \longrightarrow  \nonumber
    \\&= \Big/   \text{Time assumption}    \Big/ \nonumber
    \\&= \int  \fyy(\ypt) p(\ypt|\IPrev)\dint \ypt + \int \fyny(\apy, \bpy, \w) p(\w|\DO{\Xst}{\xst}, \IPrev) \dint \w \label{eq:inter_3}
    % \text{Observed interventions} \longrightarrow & \nonumber
    \\&= \Big/   \text{Observed interventions}    \Big/ \nonumber
    \\&= \fyy(\vecfopt) + \int \fyny(\apy, \bpy, \w) p(\w|\DO{\Xst}{\xst}, \IPrev)\dint \w \label{eq:inter_4}
    \\&= \fyy(\vecfopt) + \expectation{p(\w|\DO{\Xst}{\xst}, \IPrev)}{\fyny(\apy, \bpy, \w)} \label{eq:ob_1}
\end{align}
with $\vecfopt =  \{\expectation{}{Y_{i}|\DO{\X_{s,i}^\star}{\x_{s,i}^\star}, I_{0:i-1}}\}_{Y_{i} \in \Ypt}$ denoting the values of $\Ypt$ corresponding to the optimal interventions implemented at previous time steps .
\eq \eqref{eq:inter_1} follows from $\Yt \indep  (\Anpy \cup \Bnpy)|\Apy, \Bpy, \W, \Ypt \;\;   \text{in} \;\; \graph_{\overline{\Apy, \Bpy}\underline{\Anpy, \Bnpy}}$ (Rule 2 of do-calculus) and $\Yt \indep (\Anpy \cup \Bnpy)|\Apy, \Bpy, \W, \Ypt \;\;   \text{in} \;\; \graph_{\overline{\Apy, \Bpy}}$ (Rule 1 of do-calculus). \eq \eqref{eq:inter_2} follows from the second assumption in Assumptions (1) in the main text. \eq \eqref{eq:inter_3} follows from $\Ypt \indep \Xst$ as interventions at time $t$ cannot affect variables at time steps $0:t-1$. Finally, noticing that $p(\ypt|\IPrev)$ is the distribution targeted when optimizing the objective function at previous time steps one can obtain \eq \eqref{eq:ob_1}.

\hfill $\blacksquare$

The derivations above show how the objective function at time $t$ is given by the expected value of the output of the functional relationship $\fyny$ where the expectation is taken with respect to the variables that are not intervened on. This expectation is then shifted to account for the interventions implemented in the system at previous time steps that are affecting the target variable through $\fyy$. Notice that, given our assumption on the absence of unobserved confounders, the distribution $p(\w|\DO{\Xst}{\xst}, \IPrev)$ can be further simplified by conditioning on the variables in $\graph$ that are on the back-door path between $(\Xst, \IPrev)$ and $\Yt$ and are not colliders. When the variable $\Yt$ does not depend on the previous target nodes, the function $\fyy$ does not exist and \eq \eqref{eq:ob_1} reduces to 
\begin{equation}
\expectation{p(\w|\DO{\Xst}{\xst}, \IPrev)}{\fyny(\apy, \bpy, \w)}. 
\end{equation}
In this case previous interventions impact the target variable at time $t$ by changing the distributions of the parents of $\Yt$ that are not intervened but the information in $\vecfopt$ is lost. 

\eq \eqref{eq:ob_1} can be further manipulated to reduce the second term to a do-free expression. Instead of applying the rules of do-calculus, one can expand $p(\w|\DO{\Xst}{\xst}, \IPrev)$ by further conditioning on the parents of $\W$ that are not in $(\Xst \cup \IPrev)$. In this case, $\w$ in $\fyny(\apy, \bpy, \w)$ is replaced by the functions $\{f_W(\cdot)\}_{W \in \W}$ in the \sem corresponding to the variables in $\W$ and computed in $\w$. This leads to a partial composition of $\fyny$ with $\{f_W(\cdot)\}_{W \in \W}$ and can be repeated recursively until the set of variables with respect to which we are taking the expectation is a subset of $\Xst$ or $\IVar$ thus making the distribution a delta function. For instance, when $\W \subset \Xst$ in \eq \eqref{eq:ob_1}, we have $p(\w|\DO{\Xst}{\xst}, \IPrev) = \delta(\w = \xw))$ where $\xw$ are the values in $\xst$ corresponding to the variables in $\W$. Therefore, \eq \eqref{eq:ob_1} reduces to $\fyy(\vecfopt) + {\fyny(\apy, \bpy, \xw)}$. 

For a generic $W \in \W \not\subseteq (\Xst \cup \IVar)$, denote by $\Apw$ and $\Bpw$ the subset of variables in $\Xst$ and $\IPrev$ that are parents of $W$ with corresponding values $\apw$ and $\bpw$. Let $R = \pa{W} \backslash (\Apw \cup \Bpw)$ and $r$ be the corresponding value. We can define the $C(\cdot)$ function as: 
\begin{align}
    C(W) =  \begin{cases}
    \fW(\uW, \apw, \bpw) &\quad\text{if} \quad R = \emptyset \\
       \fW(\uW, \apw, \bpw, r) &\quad\text{if} \quad R \subseteq \Xst \cup \IVar\\
\fW(\uW, \apw, \bpw, C(R)) &\quad\text{if} \quad R \not\subseteq \X_{s, t} \cup \IVar
     \end{cases}
     \label{eq:cfunction}
\end{align}
with $u_W$ representing the exogenous variables with edges into $W$ and $\fW$ denoting the functional mapping for $W$ in the \sem. Note that if $R = \emptyset$ and $\Apw$ and $\Bpw$ are also empty then $\fW(\uW, \apw, \bpw)$ reduces to $\fW(\uW)$. The same holds for the other cases that is $\fW(\uW, \apw, \bpw, r) = \fW(\uW, r)$ and $\fW(\uW, \apw, \bpw, C(R)) = \fW(\uW, C(R))$ when $\Apw, \Bpw = \emptyset$. Exploiting \eq \eqref{eq:cfunction} we can rewrite \eq \eqref{eq:ob_1} as:

\begin{align}
    \expectation{}{\Yt|\DO{\Xst}{\xst}, \IPrev} 
    &= \fyy(\vecfopt) + \expectation{p(\Utot)}{\fyny(\apy, \bpy, \{C(W)\}_{W \in \W})}
    \label{eq:ob_2}
\end{align}

The distribution $p(\Utot)$ can be further simplified to consider only the exogenous variables with outgoing edges into the variables on the directed paths between $\Xst$ and $\Ypnt$ and between $\IVar$ and $\Ypnt$. Notice how the second term in \eq \eqref{eq:ob_2} propagates the interventions, both at the present and past time steps, through the \sem so as to express the parents of the target variable as a function of the intervened values. The expected target is then obtained as the propagation of the intervened variables and intervened targets through the function $f_{Y_t}$ in the \sem. 

\newpage

\section{Example derivations}
\label{sec:example_derivations}

% \begin{figure}[ht!]
% \centering
% \includegraphics[width =0.8\textwidth]{figures/fig_1_appendix.eps}
% \caption{(a) \DAG 1 (b) \DAG 2}
% \label{fig:dag_derivations}
% \end{figure}

Next we show how one can use Theorem 1 to derive some of the objective functions used by \our for the \DAG{s} in \fig \ref{fig:dag_derivations}.

\begin{figure*}[ht!]
    \centering
    \begin{subfigure}[t]{0.45\textwidth}
        \centering
        \includegraphics[width=0.8\textwidth]{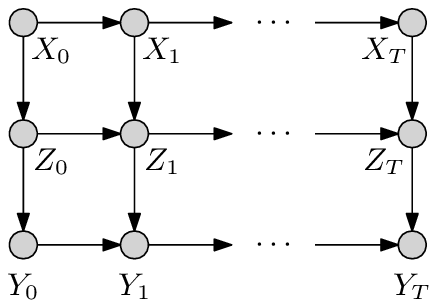}
        \caption{\DAG 1 \label{fig:dag_derivations_1}}
    \end{subfigure}%
    \hspace*{\fill}
    \begin{subfigure}[t]{0.45\textwidth}
        \centering
        \includegraphics[width=0.8\textwidth]{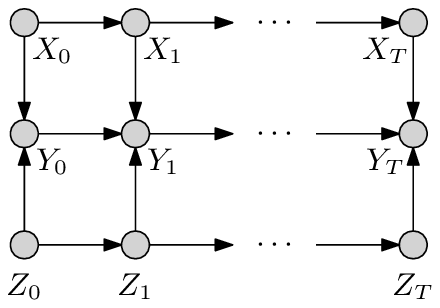}
        \caption{\DAG 2 \label{fig:dag_derivations_2}}
    \end{subfigure}%
    \caption{Dynamic Bayesian networks with different topologies. \Cref{fig:dag_derivations_1} shows a \DAG in which (per time-slice) the manipulative variable $X$ flows through $Z$, whereas in \cref{fig:dag_derivations_2} the manipulative variables are independent of each other (note the direction of the vertical edges).}
    \label{fig:dag_derivations}
\end{figure*}

\subsection{Derivations for \DAG 1 in \cref{fig:dag_derivations_1}}

Consider the \DAG in \cref{fig:dag_derivations_1} and assume that the optimal intervention implemented at time $t=0$ is given by $I_0 = \DO{Z_0}{z_0^\star}$ and gives a target value of $y_0^\star$. At $t=1$ the target variable is $Y_1$, $\Ypt = \{Y_0\}$ and $\Ypnt = \{Z_1\}$. Given $I_0$ we have $\Bpy = \emptyset$ and $\Bnpy = Z_0$. We can write the objective functions by noticing that, for $\mat{X}_{s,1} = \{Z_1\}$ we have $\Apy = \{Z_1\}$, $\Anpy = \emptyset$ and $W=\emptyset$, while for $\mat{X}_{s,1} = \{X_1\}$ we have $\Apy = \emptyset$, $\Anpy = \{X_1\}$ and $W=\{Z_1\}$. We do not compute the objective function for $\mat{X}_{s,1} = \{X_1, Z_1\}$ as this is equal to the function for $\mat{X}_{s,1} = \{Z_1\}$. Starting with $\mat{X}_{s,1} = \{Z_1\}$ we have:

\begin{align*}
    \expectation{}{Y_1|\DO{Z_1}{z}, I_0} &= \int y_1 p(y_1|\DO{Z_1}{z}, I_0) \text{d} y_1
    \\& = \int \int y_1 p(y_1|y_0, \DO{Z_1}{z}, I_0) p(y_0|\DO{Z_1}{z}, I_0)\text{d} y_1 \text{d} y_0
    \\&= \int \expectation{}{Y_1|y_0, \DO{Z_1}{z}} p(y_0|\DO{Z_1}{z}, I_0)\text{d} y_0
    \\&= \int [\fyy(y_0) + \fyny(z)] p(y_0|I_0)\text{d} y_0
    \\&= \int\fyy(y_0)p(y_0|I_0)\text{d} y_0 + \fyny(z)
    \\&= \fyy(y_0^\star) + \fyny(z)
\end{align*}
Notice that here $\Apy = \{Z_1\}$, $\Bpy = \emptyset$ and $W = \emptyset$ thus $\expectation{p(\w|\DO{\Xst}{\xst}, \IPrev)}{\fyny(\apy, \bpy, \w)} = \fyny(z)$. The objective function for $\mat{X}_{s,1} = \{X_1\}$ can be written as:

\begin{align}
    \expectation{}{Y_1|\DO{X_1}{x}, I_0} &= \int y_1 p(y_1|\DO{X_1}{x}, I_0) \text{d} y_1 \nonumber
    \\& = \int \int \int y_1 p(y_1|y_0, z_1, \DO{X_1}{x}, I_0) p(y_0, z_1|\DO{X_1}{x}, I_0)\text{d} y_1 \text{d} y_0 \text{d} z_1 \nonumber
    \\& = \int \int \int y_1 p(y_1|y_0, z_1) p(y_0, z_1|\DO{X_1}{x}, I_0)\text{d} y_1 \text{d} y_0 \text{d} z_1 \nonumber
    \\& = \int \int \expectation{}{Y_1|y_0, z_1} p(y_0, z_1|\DO{X_1}{x}, I_0) \text{d} y_0 \text{d} z_1 \nonumber
    \\& = \int \int [\fyy(y_0) + \fyny(z_1)] p(y_0, z_1|\DO{X_1}{x}, I_0) \text{d} y_0 \text{d} z_1 \nonumber
    \\& = \int \int \fyy(y_0)p(y_0, z_1|\DO{X_1}{x}, I_0) \text{d} y_0 \text{d} z_1  \\ & +\int \int \fyny(z_1) p(y_0, z_1|\DO{X_1}{x}, I_0) \text{d} y_0 \text{d} z_1 \nonumber
    \\& = \int \fyy(y_0)p(y_0|I_0) \text{d} y_0 +\int \int \fyny(z_1) p(z_1|\DO{X_1}{x}, I_0)\text{d} z_1 \nonumber
    \\& = \fyy(y_0^\star) + \int \fyny(z_1) p(z_1|\DO{X_1}{x}, I_0)\text{d} z_1 \label{eq:last_eq_X}
\end{align}

% \begin{align}
%     \\& = \int \int \fyy(y_0)p(y_0, z_1|\DO{X_1}{x}, I_0) \text{d} y_0 \text{d} z_1  +\int \int \fyny(z_1) p(y_0, z_1|\DO{X_1}{x}, I_0) \text{d} y_0 \text{d} z_1 \nonumber
%     \\& = \int \fyy(y_0)p(y_0|I_0) \text{d} y_0 +\int \int \fyny(z_1) p(z_1|\DO{X_1}{x}, I_0, y_0)p(y_0|\DO{X_1}{x}, I_0) \text{d} y_0 \text{d} z_1 \nonumber
%     \\& = \int \fyy(y_0)p(y_0|I_0) \text{d} y_0 +\int \fyny(z_1) p(z_1|\DO{X_1}{x}, I_0, y_0^\star) \text{d} z_1 \nonumber
%     \\& = \fyy(y_0^\star) + \int \fyny(z_1) p(z_1|\DO{X_1}{x}, I_0)\text{d} z_1 \label{eq:last_eq_X}
% \end{align}

In this case $\Apy = \emptyset$, $\Bpy = \emptyset$ and $\W =\{Z_1\}$ thus 
\begin{equation}
\expectation{p(\w|\DO{\Xst}{\xst}, \IPrev)}{\fyny(\apy, \bpy, \w)} = \expectation{p(z_1|\text{do}(X_1 = x), I_0)}{\fyny(z_1)}.     
\end{equation}
We can further expand \eq \eqref{eq:last_eq_X} noticing that in this case $\W = \{Z_1\} \not\subseteq \{X_1, Z_0\}$ but $\X_{s, t}^{PW} = \{X_1\}$, $I_{0:t-1}^{PW} = \{Z_0\}$ and $R = \emptyset$. Therefore we have $C(Z_1) = f_{Z_1}(\epsilon_{Z_1}, x_1, z_1)$ and \eq \eqref{eq:last_eq_X} becomes:

\begin{align*}
    \expectation{}{Y|\DO{X_1}{x}, I_0}  &= \fyy(y_0^\star) + \int \fyny(z_1) p(z_1|\DO{X_1}{x}, I_0)\text{d} z_1 
    \\&= \fyy(y_0^\star) + \int \int \fyny(z_1) p(z_1|\epsilon_{Z_1}, \DO{X_1}{x}, I_0)p(\epsilon_{Z_1}|\DO{X_1}{x}, I_0)\text{d} z_1 \text{d} \epsilon_{Z_1} 
    \\&= \fyy(y_0^\star) + \int \int \fyny(z_1) \delta(z_1 = f_{Z_1}(\epsilon_{Z_1}, x, z_0^\star))p(\epsilon_{Z_1})\text{d} z_1 \text{d} \epsilon_{Z_1} 
    \\& = \fyy(y_0^\star) + \expectation{p(\epsilon_{Z_1})}{\fyny(f_{Z_1}(\epsilon_{Z_1}, x, z_0^\star))}.
\end{align*}

\subsection{Derivations for \DAG 2 in \cref{fig:dag_derivations_2}}

Next we consider the \DAG in \cref{fig:dag_derivations_2} and assume that the optimal interventions implemented at time $t=0$ and $t=1$ are given by $I_0 = \DO{X_0}{x_0^\star}$ and $I_1 = \DO{Z_1}{z_1^\star}$. The optimal target values associated with these two interventions are given by $y_0^\star$ and $y_1^\star$ respectively. We are interested in computing two objective functions: $\expectation{}{Y_2|\DO{X_2}{x_2}, I_0, I_1}$ and $\expectation{}{Y_2|\DO{Z_2}{z_2}, I_0, I_1}$. In this case $\ypt = \{Y_1\}$, $\Ypnt =\{X_2, Z_2\}$, $\Bpy = \emptyset$ and $\Bnpy = \{X_0, Z_1\}$. Starting from $\expectation{}{Y_2|\DO{X_2}{x_2}, I_0, I_1}$, when $\mat{X}_{s,2} = \{X_2\}$ we have $\Apy = \{X_2\}$, $\Anpy = \emptyset$ and $W = \{Z_2\}$. We can write:

\begin{align}
    \expectation{}{Y_2|\DO{X_2}{x_2}, I_0, I_1} &= \int y_2 p(y_2|\DO{X_2}{x_2}, I_0, I_1) \text{d}y_2 \nonumber
    \\&= \int\int\int y_2 p(y_2|y_1, z_2, \DO{X_2}{x_2}, I_0, I_1) p(y_1, z_2| \DO{X_2}{x_2}, I_0, I_1)\text{d}y_2 \text{d}y_1 \text{d}z_2 \nonumber
    \\& = \int\int\int y_2 p(y_2|y_1, z_2, \DO{X_2}{x_2}) p(y_1, z_2| \DO{X_2}{x_2}, I_0, I_1)\text{d}y_2 \text{d}y_1 \text{d}z_2 \nonumber
    \\& = \int\int \expectation{}{Y_2|y_1, z_2, \DO{X_2}{x_2}} p(y_1, z_2| \DO{X_2}{x_2}, I_0, I_1) \text{d}y_1 \text{d}z_2 \nonumber
    \\& = \int\int [\fyy(y_1) + \fyny(x_2, z_2)] p(y_1, z_2| \DO{X_2}{x_2}, I_0, I_1) \text{d}y_1 \text{d}z_2 \nonumber
     \\& = \int\int \fyy(y_1)p(y_1, z_2| \DO{X_2}{x_2}, I_0, I_1) \text{d}y_1 \text{d}z_2  \nonumber \\& + \int \int \fyny(x_2, z_2) p(y_1, z_2| \DO{X_2}{x_2}, I_0, I_1) \text{d}y_1 \text{d}z_2 \nonumber
     \\& = \int \fyy(y_1)p(y_1| I_0, I_1) \text{d}y_1 + \int \fyny(x_2, z_2) p(z_2| \DO{X_2}{x_2}, I_0, I_1) \text{d}z_2 \nonumber
     \\& = \fyy(y_1^\star) + \int \fyny(x_2, z_2) p(z_2| I_1) \text{d}z_2 \nonumber
     \\& = \fyy(y_1^\star) + \expectation{p(\epsilon_{Z_2})}{\fyny(x_2, f_{Z_2}(z_1^\star, \epsilon_{Z_2}))} \nonumber
\end{align}

Next we compute $\expectation{}{Y_2|\DO{Z_2}{z_2}, I_0, I_1}$ by noticing that, when $\mat{X}_{s,2} = \{Z_2\}$, we have $\Apy = \{Z_2\}$, $\Anpy = \emptyset$ and $W = \{X_2\}$. In this case we have:

\begin{align}
    \expectation{}{Y_2|\DO{Z_2}{z_2}, I_0, I_1} &= \int y_2 p(y_2|\DO{Z_2}{z_2}, I_0, I_1) \text{d}y_2 \nonumber
    \\&= \int\int\int y_2 p(y_2|y_1, x_2, \DO{Z_2}{z_2}, I_0, I_1) p(y_1, x_2| \DO{Z_2}{z_2}, I_0, I_1)\text{d}y_2 \text{d}y_1 \text{d}x_2 \nonumber
    \\& = \int\int\int y_2 p(y_2|y_1, x_2, \DO{Z_2}{z_2}) p(y_1, x_2| \DO{Z_2}{z_2}, I_0, I_1)\text{d}y_2 \text{d}y_1 \text{d}x_2 \nonumber
    \\& = \int\int \expectation{}{Y_2|y_1, x_2, \DO{Z_2}{z_2}} p(y_1, x_2| \DO{Z_2}{z_2}, I_0, I_1) \text{d}y_1 \text{d}x_2 \nonumber
    \\& = \int\int [\fyy(y_1) + \fyny(x_2, z_2)] p(y_1, x_2| \DO{Z_2}{z_2}, I_0, I_1) \text{d}y_1 \text{d}x_2 \nonumber
    \\& = \int  \fyy(y_1) p(y_1| I_0, I_1) \text{d}y_1 + \int \fyny(x_2, z_2) p(x_2| \DO{Z_2}{z_2}, I_0, I_1) \text{d}x_2 \nonumber
    \\& = \fyy(y_1^\star) + \int \fyny(x_2, z_2) p(x_2| \DO{Z_2}{z_2}, I_0, I_1) \text{d}x_2 \label{eq:last_eq_y_2_z_2}
\end{align}

Let's now focus on \cref{eq:last_eq_y_2_z_2}. Here $\W=\{X_2\} \not\subseteq \{Z_2, X_0, Z_1\}$, $\Apw  = \emptyset$, $\Bpw = \emptyset$ and $R = \{X_1\}$. Therefore we have $C(X_2) = f_{X_2}(\epsilon_{X_2}, C(R))$ as $R \not\subseteq \{Z_2, X_0, Z_1\}$. We thus need to compute $C(R) = C(X_1)$. When $W=X_1$, $\X_{s, t}^{PW} = \emptyset$ but $I_{0:t-1}^{PW} = \{X_0\}$ and $R = \emptyset$. We can thus write $C(X_2) = f_{X_2}(\epsilon_{X_2}, f_{X_1}(\epsilon_{X_1}, x_0))$ and replace it in \eq \eqref{eq:last_eq_y_2_z_2} to get:

\begin{align}
    \expectation{}{Y_2|\DO{Z_2}{z_2}, I_0, I_1} &= \fyy(y_1^\star) + \expectation{p(\epsilon_{X_2})p(\epsilon_{X_1})}{\fyny(f_{X_2}(\epsilon_{X_2}, f_{X_1}(\epsilon_{X_1}, x_0)), z_2)}. \nonumber
\end{align}

\newpage

\section{Reducing the search space}
In this section we give the proof for Proposition 3.1 in the main text. Denote by $\missets_t \subseteq \partsXt$ the set of \mis{s} at time $t$ and let $\Sset_t = \partsXt \backslash \missets_t$ include the sets that are not \mis. For any set $\Xst \in \Sset_t$ we denote the \emph{superfluous} variables by $\Sst$. These are the variables not needed in the computation of the objective functions that is those variables for which $\expectation{}{\Yt|\DO{\Xst}{\xst}, \IPrev} = \expectation{}{\Yt|\DO{\Xst'}{\xst'}, \IPrev}$ where $\Xst' = \Xt \backslash \Sst$. Given the initial set of \mis{s} at time $t=0$ represented by $\missets_0$ we have:

\begin{proposition}{\textbf{Minimal intervention sets in time.}}
If $\graph_t = \graph, \forall t $ then $\missets_t = \missets_0$ for $t>0$.
\end{proposition}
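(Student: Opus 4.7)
The plan is to argue that membership in $\missets_t$ is determined by a purely graphical property of the intra-slice subgraph at time $t$, and that Assumption 1 forces this subgraph to be isomorphic across time steps. Previous interventions $\IPrev$ only mutilate edges upstream of the slice at $t$ and therefore cannot alter this property.

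First, I would recall (following \cite{lee2018structural}) that a variable is superfluous in $\Xst$ precisely when, after intervention on $\Xst\setminus \Sst$, it is d-separated from $Y_t$ in the appropriately mutilated graph. This makes membership in $\missets_t$ a purely topological criterion, determined by the ancestral structure of $Y_t$ in $\graph_t$ with respect to $\Xt$, and not by any quantitative aspect of the \sem.

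Second, I would observe that every directed path in $\graph_{0:T}$ that ends at $Y_t$ and originates in $\Xt$ lies entirely inside the slice at time $t$, since edges point forward in time and $\Xt, Y_t$ share the same time index. Moreover, the mutilations induced by $\IPrev$ remove incoming edges only into variables at times $0,\dots,t-1$, which are strictly upstream of the slice at $t$; they neither add nor remove any directed path from $\Xt$ to $Y_t$, and they do not create or destroy any d-connection between subsets of $\Xt$ and $Y_t$ relevant to the \mis criterion. Consequently the graphical criterion defining $\missets_t$ depends only on the intra-slice subgraph at time $t$.

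Third, Assumption 1 asserts that this intra-slice subgraph is, under the canonical time-shift bijection $V_0 \leftrightarrow V_t$, identical to the intra-slice subgraph at $t=0$, which is the very graph used to compute $\missets_0$ in \cbo. The same topological test is therefore being applied to the same graphical object at every $t$, yielding $\missets_t = \missets_0$ as claimed. The delicate step is the second one: one must carefully verify that neither the mutilation of previously intervened nodes nor the extra incoming edges that cross from earlier slices into the slice at $t$ can re-introduce or sever any d-connection relevant to the \mis criterion for $Y_t$, which follows from combining the forward-in-time orientation of $\graph_{0:T}$ with the fact that mutilation only removes incoming edges into previously intervened variables.
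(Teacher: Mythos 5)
Your argument is correct and follows essentially the same route as the paper's proof: the paper makes the do-calculus computation explicit, showing $\expectation{}{\Yt|\DO{\Xst}{\xst}, \IPrev} = \expectation{}{\Yt|\DO{\Xst'}{\xst'}, \IPrev}$ for a superfluous subset $\Sst$, but the justification it invokes is exactly your second and third steps — $\Sst$ has no back-door paths to $\Yt$ in the mutilated graph, cannot have outgoing edges into slices $0{:}t-1$ because edges point forward in time, and its front-door paths within the (invariant) time slice are blocked by $\Xst'$ just as at $t=0$. The "delicate step" you flag is precisely what the paper's displayed derivation verifies, so the two proofs coincide in substance.
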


\begin{proof}
Consider a generic set $\Xst \in \Sset_t$. The corresponding objective function can be written as:

\begin{align}
    \expectation{}{\Yt|\DO{\Xst}{\xst}, \IPrev} & = \expectation{}{\Yt|\DO{\Xst'}{\xst'}, \DO{\Sst}{\sst}, \IPrev} \nonumber 
    \\& 
    = \int \expectation{}{\Yt|\DO{\Xst'}{\xst'}, \DO{\Sst}{\sst}, \IPrev, \Vtotone \backslash \IPrev} \nonumber 
    \\& 
    \;\;\;\;\; \times p(\Vtotone \backslash \IPrev|\DO{\Xst'}{\xst'}, \DO{\Sst}{\sst}, \IPrev) \dint \Vtotone \nonumber 
    \\& 
    = \int \expectation{}{\Yt|\DO{\Xst'}{\xst'}, \IPrev, \Vtotone \backslash \IPrev} \label{eq:remove_S} 
    \\& 
    \;\;\;\;\; \times p(\Vtotone \backslash  \IPrev|\DO{\Xst'}{\xst'} ,\IPrev) \dint \Vtotone \nonumber 
    \\&
    = \expectation{}{\Yt|\DO{\Xst'}{\xst'}, \IPrev}
\end{align}

where \eq \eqref{eq:remove_S} can be obtained by noticing that $\Yt \indep \Sst | \Xst', \IPrev, \Vtotone \backslash \IPrev$ in $\graph_{\overline{\Sst, \IPrev, \Xst'}}$. This is due to the fact that $\Sst$ does not have back door paths to $\Yt$ in $\graph_{\overline{\Sst, \IPrev, \Xst'}}$ and its front door paths to $\Yt$ in $\graph_{\overline{\Sst, \IPrev, \Xst'}}$ are blocked by $\Xst'$. Indeed, $\Sst$ cannot have outgoing edges to variables in $0:t-1$ and the front door paths to $\Yt$ going through variables at time $t$ are blocked by definition of a \mis set by $\Xst'$ in $\graph_t = \graph, \forall t$.
\end{proof}

\newpage

\section{Additional experimental details and results}
\label{sec:additional_exp}
This section contains additional experimental details associated to the experiments discussed in Section \ref{sec:experiments} of the main text.

\subsection{Stationary \DAG and \sem (\expone)} 

The \sem used for this experiments is given by:
\begin{align*}
    X_t &= X_{t-1}\mathds{1}_{t>0} + \epsilon_X\\
    Z_t &=  \exp(-X_t) + Z_{t-1}\mathds{1}_{t>0} + \epsilon_Z\\
    Y_t &= \text{cos}(Z_t) - \exp(-Z_t/20) + Y_{t-1}\mathds{1}_{t>0} + \epsilon_Y
\end{align*}
where $\epsilon_i \sim \mathcal{N}(0,1)$ for $i \in \{X,Z,Y\}$ and  $\mathds{1}_{t>0}$ represent an indicator function that is equal to one $t>0$ and zero otherwise. We run this experiment 10 times by setting $T=3$, $N=10$, $D(X_t) = \{-5.0, 5.0\}$ and $D(Z_t) = \{-5.0, 20.0\}$. Notice that given the \DAG (\fig X) we have $\missets_t = \{\{X_t\}, \{Z_t\}\}$.

The right panel of \fig \ref{fig:toy_example} shows the true objective functions together with the optimal intervention per time step ($1^{\text{st}}$ row), the dynamic causal \gptext model for the intervention on $Z$ ($2^{\text{nd}}$ row) and the convergence of the \our algorithm to the optimum ($3^{\text{rd}}$ row). Notice how the location of the optimum changes significantly both in terms of optimal set and intervention value when going from $t=0$ to $t=1$.  \our quickly identifies the optimum via the prior dependency on $\ytotstar$. 
\begin{figure}[ht!]
\centering
\begin{subfigure}{.3\textwidth}
  \centering
  \includegraphics[width=1.\linewidth]{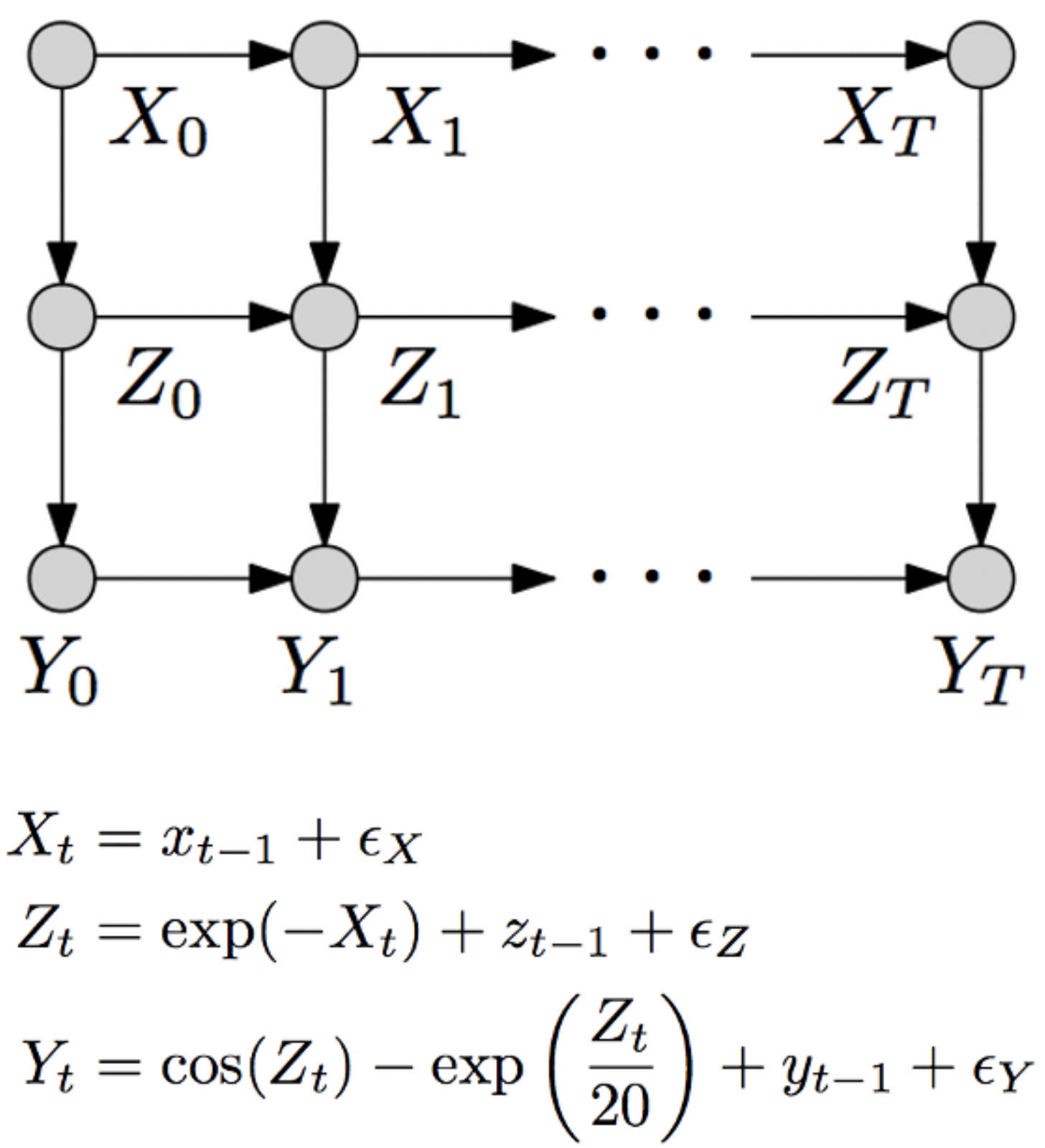}
\end{subfigure}%
\begin{subfigure}{.7\textwidth}
  \centering
  \includegraphics[width=1.\linewidth]{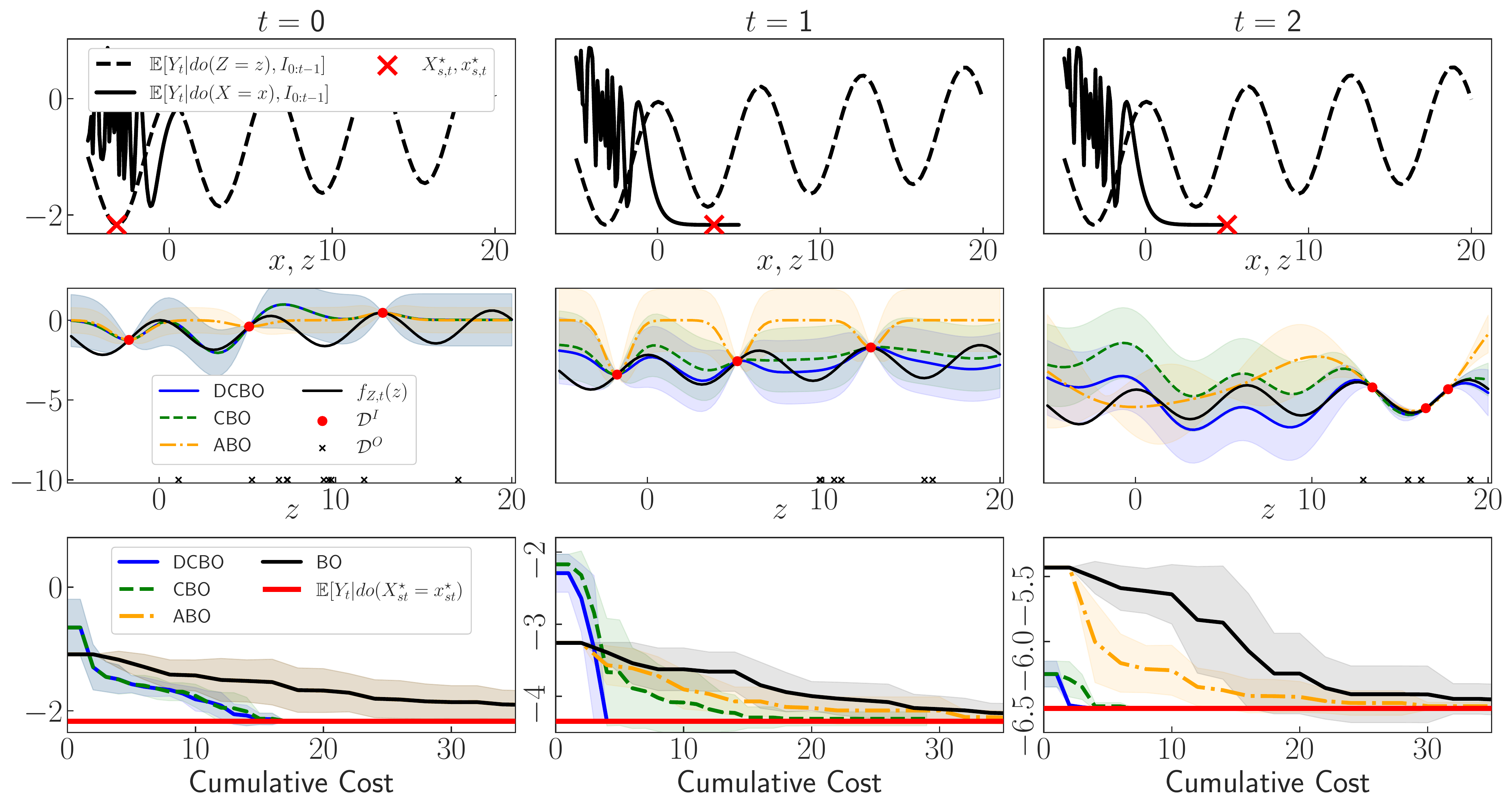}
\end{subfigure}
\caption{Stationary synthetic experiment (\expone). \emph{Left panel}: $\graph_{0:T}$ and \sem. \emph{Right panel, 1\textsuperscript{st} row}: 
Objective functions for the sets in $\missets = \{\{Z\}, \{X\}\}$. 
\emph{Right panel, 2\textsuperscript{nd} row}: Posterior \gptext obtained when using the dynamic causal \gptext construction vs alternative models. 
\emph{Right panel, 3\textsuperscript{rd} row}: Convergence of \our and alternative models to the true optimum (red line) across 10 replicates. Shaded areas give $\pm$ one standard deviation.} 
\label{fig:toy_example}
\end{figure}

\subsection{Noisy manipulative variables (\expnoise)} 
The \sem used for this experiments is given by:
\begin{align*}
    X_t &= X_{t-1}\mathds{1}_{t>0} + \epsilon_X\\
    Z_t &=  \exp(-X_t) + Z_{t-1}\mathds{1}_{t>0} + \epsilon_Z\\
    Y_t &= \text{cos}(Z_t) - \exp(-Z_t/20) + Y_{t-1}\mathds{1}_{t>0} + \epsilon_Y
\end{align*}
where, differently from before, we have $\epsilon_Y \sim \mathcal{N}(0,1)$ and $\epsilon_i \sim \mathcal{N}(2,4)$ for $i \in \{X,Z\}$. We keep the remaining parameters equal to the previous experiment. This means $T=3$, $N=10$, $D(X_t) = \{-5.0, 5.0\}$ and $D(Z_t) = \{-5.0, 20.0\}$. 

\subsection{Missing observational data (\expmissing)} For this experiment we use the same \sem of the experiment \expone However, we set $T=6$, $N=10$ for the first three time steps and $N=0$ afterwards. \fig \ref{fig:missing_exp} shows the convergence paths for this experiment. In this setting \our consistently outperform \cbo at every time step. However, notice how \abo performance improves over time and outperforms \our starting from $t=5$. This is due to the ability of \abo to learn the time dynamic of the objective function and exploit all interventional data collected over time to predict at the next time step. 
\begin{figure*}
\centering
 \includegraphics[width=\linewidth]{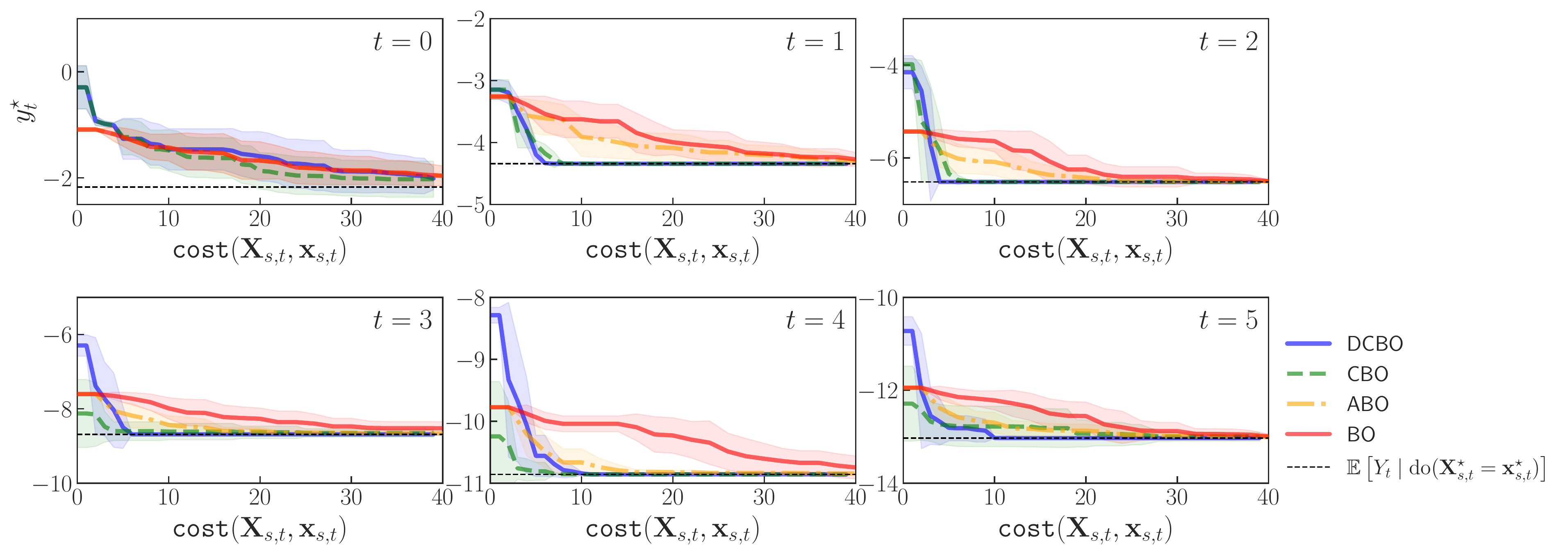}
\caption{Experiment \expmissing. Convergence of \our and competing methods across replicates. The red line gives the optimal $y^*_t, \forall t$. Shaded areas are $\pm$ standard deviation.}
\label{fig:missing_exp}
\end{figure*}

\subsection{Multivariate intervention sets (\expcomplex)} 

The \sem used for this experiments is given by:
\begin{align*}
    W_t &= \epsilon_W\\
    X_t &= - X_{t-1}\mathds{1}_{t>0} + \epsilon_X\\
    Z_t &=  \text{sin}(W_t) - Z_{t-1}\mathds{1}_{t>0} + \epsilon_Z\\
    Y_t &= - 2*\exp(-(X_{t}-1)^2) - \text{exp}(-(X_t + 1)^2)
    - (Z_t - 1)^2 
    \\ & \; \; \; \; - Z_T^2 + \text{cos}(Z_t * Y_{t-1}) - Y_{t-1}\mathds{1}_{t>0} + \epsilon_Y
\end{align*}
where $\epsilon_i \sim \mathcal{N}(0,1)$ for $i \in \{X,Z,W,Y\}$. We set $T=3$, $N=500$, $D(X_t) = \{-5.0, 5.0\}$, $D(Z_t) = \{-5.0, 20.0\}$ and $D(W_t) = \{-3.0, 3.0\}$. Notice that here \our and \cbo explore the set $\missets_t = \{\{X_t\}, \{Z_t\}, \{X_t, Z_t\}\}$ while \bo and \abo intervene on $\{X_t, Z_t, W_t\}$. \fig \ref{fig:complex_exp} shows the convergence paths for this experiment.
\begin{figure*}
\centering
\includegraphics[width=\linewidth]{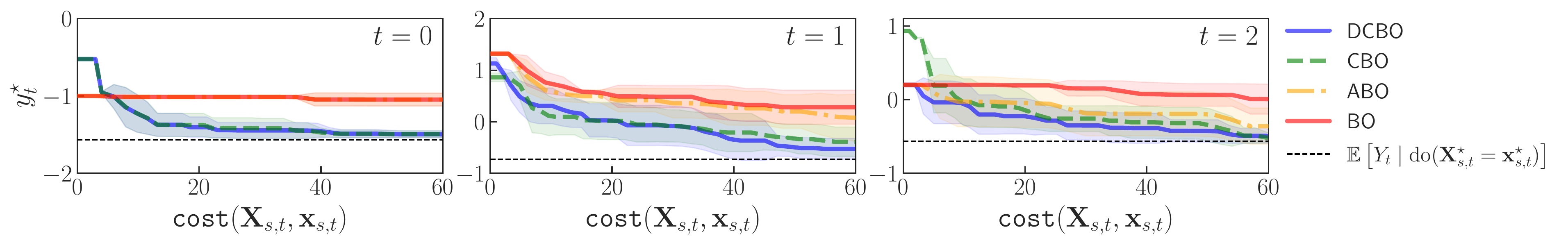}
\caption{Experiment \expcomplex. Convergence of \our and competing methods across replicates. The red line gives the optimal $y^*_t, \forall t$. Shaded areas are $\pm$ standard deviation.}
\label{fig:complex_exp}
\end{figure*}

\subsection{Independent manipulative variables (\expindep)} 

The \sem used for this experiments is given by:
\begin{align*}
    X_t &= - X_{t-1}\mathds{1}_{t>0} + \epsilon_X\\
    Z_t &= - Z_{t-1}\mathds{1}_{t>0} + \epsilon_Z\\
    Y_t &= - 2*\exp(-(X_{t}-1)^2) - \text{exp}(-(X_t + 1)^2)
    - (Z_t - 1)^2 
    \\ & \; \; \; \; - Z_T^2 + \text{cos}(Z_t * Y_{t-1}) - Y_{t-1}\mathds{1}_{t>0} + \epsilon_Y
\end{align*}
where $\epsilon_i \sim \mathcal{N}(0,1)$ for $i \in \{X,Z,Y\}$. We set $T=3$, $N=10$, $D(X_t) = \{-5.0, 5.0\}$ and $D(Z_t) = \{-5.0, 20.0\}$. Notice that here \our and \cbo explore the set $\missets_t = \{\{X_t\}, \{Z_t\}, \{X_t, Z_t\}\}$ while \bo and \abo intervene on $\{X_t, Z_t\}$.  In this case, exploring $\missets_t$ and propagating uncertainty in the causal prior slows down \our convergence, see \cref{fig:indep_exp}.
\begin{figure*}
\centering
\includegraphics[width=\linewidth]{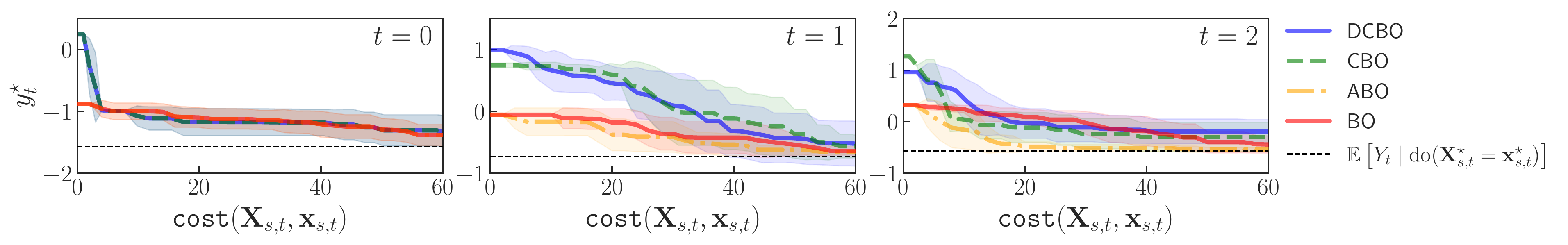}
\caption{Experiment \expindep Convergence of \our and competing methods across replicates. The red line gives the optimal $y^*_t, \forall t$. Shaded areas are $\pm$ standard deviation.}
\label{fig:indep_exp}
\end{figure*}

\subsection{Non-stationary \DAG and \sem (\expnonstat)}

The \sem used for this experiment is more complex than the others due to the fact that the \DAG is non-stationary but so too is the \sem:

\begin{align}
    \begin{cases}
        f(t) &\text{if } t =0 \\
        g(t) &\text{if } t =1 \\
        h(t) &\text{if } t =2
    \end{cases}
\end{align}
where
\begin{align*}
    f(t) = 
    \begin{cases}
    X_t &= \epsilon_X \\
    Z_t &= X_t + \epsilon_Z\\
    Y_t &= \sqrt{|36 - (Z_t-1)^2|} + 1 + \epsilon_Y
    \end{cases}
\end{align*}
\begin{align*}
    g(t) = 
    \begin{cases}
    X_t &= X_{t-1} + \epsilon_X \\
    Z_t &= -\frac{X_t}{X_{t-1}} + Z_{t-1} + \epsilon_Z \\
    Y_t &= Z_t\cos(Z_t \pi) - Y_{t-1} + \epsilon_Y
    \end{cases}
\end{align*}
\begin{align*}
    h(t) = 
    \begin{cases}
    X_t &= X_{t-1} + \epsilon_X\\
    Z_t &= X_t + Z_{t-1} + \epsilon_Z\\
    Y_t &= Z_t - Y_{t-1} - Z_{t-1} + \epsilon_Y
    \end{cases}
\end{align*}
where $\epsilon_i \sim \mathcal{N}(0,1)$ for $i \in \{X,Z,Y\}$. We set $T=3$, $N=10$, $D(X_t) = \{-5.0, 5.0\}$ and $D(Z_t) = \{-5.0, 20.0\}$. Notice that here \our and \cbo explore the set $\missets_t = \{\{X_t\}, \{Z_t\}, \{X_t, Z_t\}\}$ while \bo and \abo intervene on $\{X_t, Z_t\}$.

\subsection{Real-World Economic data (\exprealec)} We create an observational data set by extracting the following indicators from the \acro{oecd} data portal (\texttt{https://data.oecd.org/}):

\begin{itemize}
    \item \acro{gdp} = \acro{gdp} in milion of US dollars.
    \item \acro{cpi} = annual growth of inflation measured by consumer price index \acro{cpi}.
    \item \acro{taxrev} = tax revenues measured as a percentage of \acro{gdp}.
    \item \acro{hur} = unemployment rate as measured by the numbers of unemployed people as a percentage of the labour force. 
\end{itemize}

We manipulate these indicators to get the nodes in the \DAG of \fig. \ref{fig:dag_econ}. We define 
\begin{align*}
    U_t &= \log(\acro{hur}_t) \\
    T_t &= \frac{\acro{taxrev}_t*\acro{gdp}_t - \acro{taxrev}_{t-1}*\acro{gdp}_{t-1}}{\acro{taxrev}_{t-1}*\acro{gdp}_{t-1}} \\
    G_t &= \frac{\acro{gdp}_t - \acro{gdp}_{t-1}}{\acro{gdp}_{t-1}} \\
    I_t &= \acro{cpi}_t
\end{align*}
For this analysis we consider the annual data for 10 countries namely Australia, Canada, France, Germany, Italy, Japan, Korea, Mexico, Turkey, Great Britain and the United States of America for the period (2000 - 2019). We fit the following \sem:
\begin{align*}
    T_t &= f_T(t) + \epsilon_T \\
    I_t &= f_I(t) + \epsilon_I \\
    G_t &= f_G(T_t, I_t) + \epsilon_G \\
    U_t &= f_U(G_t, I_t) + \epsilon_U \\
\end{align*}
by placing \gptext{s} on all functions $f_i(\cdot), i \in \{T, I, G, U\}$. This \sem is then used to generate interventional data and compute the values of $y^\star_t, t = 2010,\dots, 2012$. 

\begin{figure}[ht!]
\centering
\includegraphics[width=\linewidth]{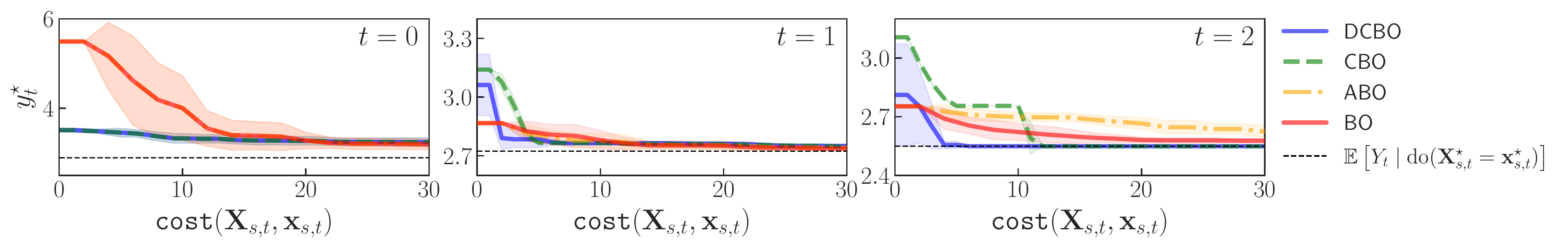}
\caption{Experiment \exprealec Convergence of \our and competing methods across replicates. The black line gives the optimal $y^*_t, \forall t$. Shaded areas are $\pm$ one standard deviation.}
\label{fig:economy_example}
\end{figure}

We run the optimization 10 times and plot the convergence path for \our and competing models (see \fig \ref{fig:economy_example}). While all method perform similarly at $t=2010$ and $t=2011$, \our outperforms competing approaches at $t=2012$. On average (see \cref{tab:table_gaps}) \our finds the optimal intervention faster.

\subsection{Results without convergence}
We repeat all experiments in the paper allowing the algorithms to perform a lower number of trials at every time steps. This means that, for $t>0$, when moving to step $t$ the convergence of the algorithm at step $t-1$ is not guaranteed. In turn this affect the optimum value that the algorithm can reach at subsequent steps. Results are given in \cref{tab:table_gaps_noconv} and \cref{tab:table_sets_noconv}. The convergence paths for \our and competing methods are given in \cref{fig:static_example_noconv} to  \cref{fig:multivariate_example_noconv}.

\begin{table}[htbp]
\centering
\caption{Average modified gap measure (10 replicates) across time steps and for different experiments. See \fig \ref{fig:map_methods} for a summary of the compared methods. Higher values are better. The best result for each experiment is bolded. Standard errors in brackets.}
\label{tab:table_gaps_noconv}
\resizebox{\columnwidth}{!}{
%@{\extracolsep{4pt}}@{\kern\tabcolsep}
\begin{tabular}{lcccccccc}
\toprule
&  \multicolumn{6}{c}{Synthetic data} & \multicolumn{2}{c}{Real data} \\
\cmidrule(lr){2-7} 
\cmidrule(lr){8-9}
& \expone & \expmissing & \expnoise &  \expcomplex & \expindep & \expnonstat & \exprealec & \exprealpol \\
\cmidrule(lr){2-7} 
\cmidrule(lr){8-9}
\multirow{2}{*}{$\our$} &  \textbf{0.88} & \textbf{0.72}  & \textbf{0.73} & \textbf{0.49} & 0.47 & \textbf{0.47} & 0.40 &  \textbf{0.67}\\
          &  (0.00) & (0.07) & (0.00) & (0.00) & (0.05) & (0.00) & (0.04) & (0.00) \\
\multirow{2}{*}{$\cbo$} & 0.57 & 0.51 & 0.67 & 0.47 & 0.48 & \textbf{0.47} & \textbf{0.41} & 0.65 \\
          & (0.02) & (0.09) & (0.01) & (0.04) & (0.04) & (0.00) & (0.04) & (0.00)\\
\multirow{2}{*}{$\abo$} & 0.43 & 0.45 & 0.42 & 0.40 & \textbf{0.50} & 0.41 & 0.38 & 0.47\\
          & (0.06) & (0.04) & (0.06) & (0.05) & (0.00) & (0.03) & (0.04) & (0.01) \\
\multirow{2}{*}{$\bo$} & 0.42 & 0.41 & 0.41 & 0.38 & \textbf{0.50} & 0.40& 0.40 & 0.46\\
          & (0.06) & (0.05) & (0.07) & (0.07) & (0.01) & (0.04) & (0.04) & (0.03)\\
\bottomrule
\end{tabular}
}
\end{table}

\begin{table}[htbp]
\centering
\caption{Average percentage of replicates across time steps and for different experiments for which the optimal intervention set is identified. See \fig \ref{fig:map_methods} for a summary of the compared methods. Higher values are better. The best result for each experiment is bolded.}
\label{tab:table_sets_noconv}
\resizebox{\columnwidth}{!}{
\begin{tabular}{lcccccccc}
\toprule
&  \multicolumn{6}{c}{Synthetic data} & \multicolumn{2}{c}{Real data} \\
\cmidrule(lr){2-7} 
\cmidrule(lr){8-9}
& \expone & \expmissing & \expnoise &  \expcomplex & \expindep & \expnonstat & \exprealec & \exprealpol \\
\cmidrule(lr){2-7} 
\cmidrule(lr){8-9}
\multirow{1}{*}{$\our$} & \textbf{90.0} & \textbf{70.00} & \textbf{93.00}  & \textbf{93.33} & \textbf{96.67} & \textbf{66.67} & 73.33 & \textbf{33.33} \\
\multirow{1}{*}{$\cbo$} & 76.67 & 63.33 & 76.67 & 86.67 & 93.33 & 33.33 &   \textbf{80.00} & \textbf{33.33}\\
\multirow{1}{*}{$\abo$} & 0.00 & 0.00 & 0.00 & 0.00 & 100.00 & 0.00 & 66.67 & 0.00\\
\multirow{1}{*}{$\bo$} & 0.00 & 0.00 & 0.00 & 0.00 & 100.00 & 0.00 & 66.67 & 0.00\\
\bottomrule
\end{tabular}
}
\end{table}

\begin{figure}[ht!]
\centering
\includegraphics[width=\linewidth]{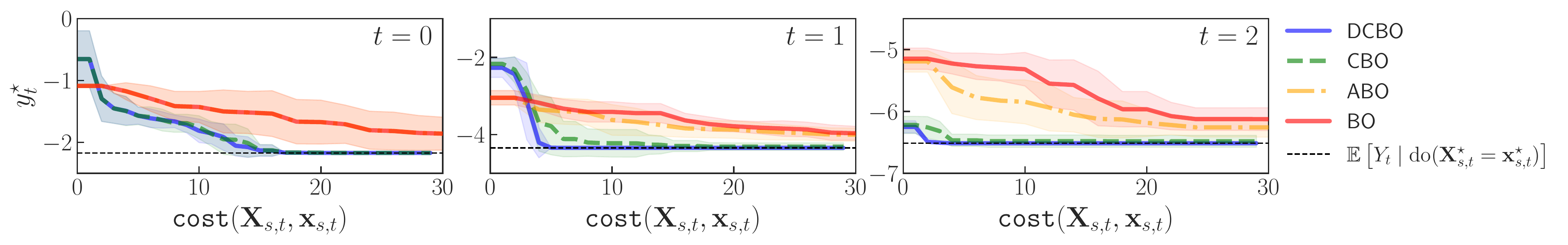}
\caption{Experiment \expone with maximum number of trials $H=30$. Convergence of \our and competing methods across replicates. The black line gives the optimal $y^*_t, \forall t$. Shaded areas are $\pm$ one standard deviation.}
\label{fig:static_example_noconv}
\end{figure}

\begin{figure}[ht!]
\centering
\includegraphics[width=\linewidth]{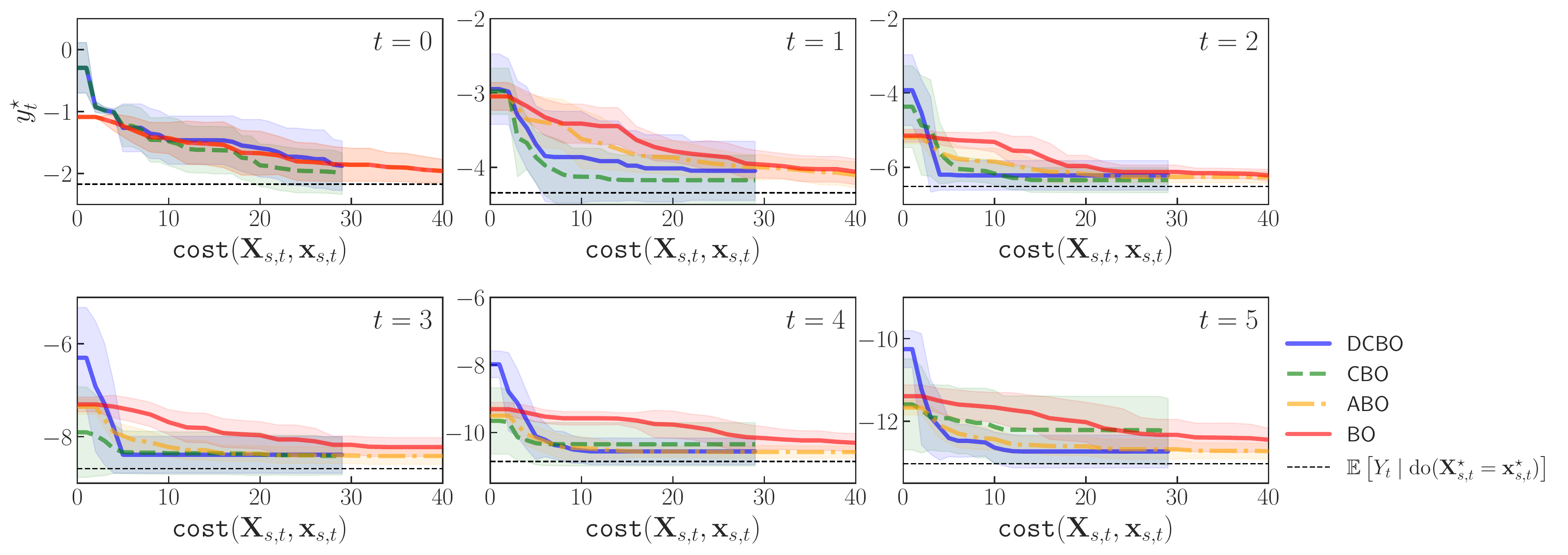}
\caption{Experiment \expmissing with maximum number of trials $H=30$. Convergence of \our and competing methods across replicates. The black line gives the optimal $y^*_t, \forall t$. Shaded areas are $\pm$ one standard deviation.}
\label{fig:missing_example_noconv}
\end{figure}

\begin{figure}[ht!]
\centering
\includegraphics[width=\linewidth]{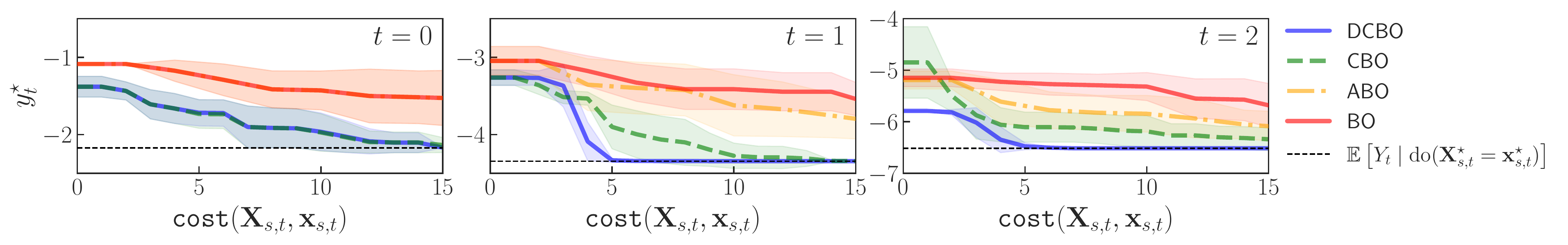}
\caption{Experiment \expnoise. with maximum number of trials $H=30$. Convergence of \our and competing methods across replicates. The black line gives the optimal $y^*_t, \forall t$. Shaded areas are $\pm$ one standard deviation.}
\label{fig:noisy_example_noconv}
\end{figure}

\begin{figure}[ht!]
\centering
\includegraphics[width=\linewidth]{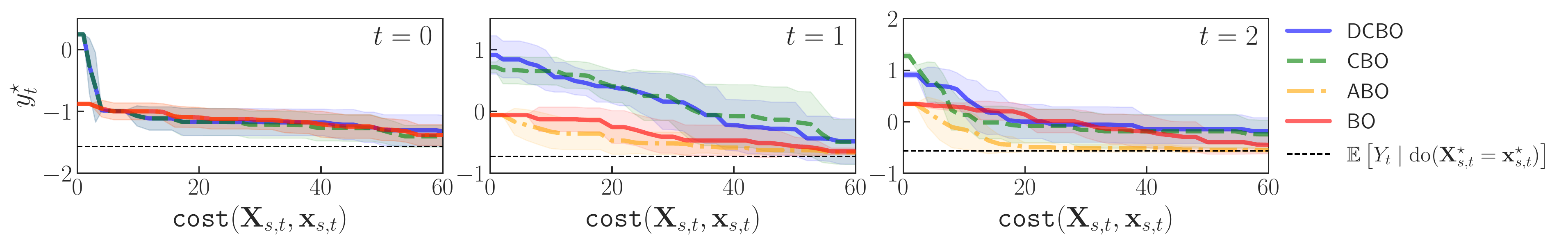}
\caption{Experiment \expindep with maximum number of trials $H=30$. Convergence of \our and competing methods across replicates. The black line gives the optimal $y^*_t, \forall t$. Shaded areas are $\pm$ one standard deviation.}
\label{fig:independent_example_noconv}
\end{figure}

\begin{figure}[ht!]
\centering
\includegraphics[width=\linewidth]{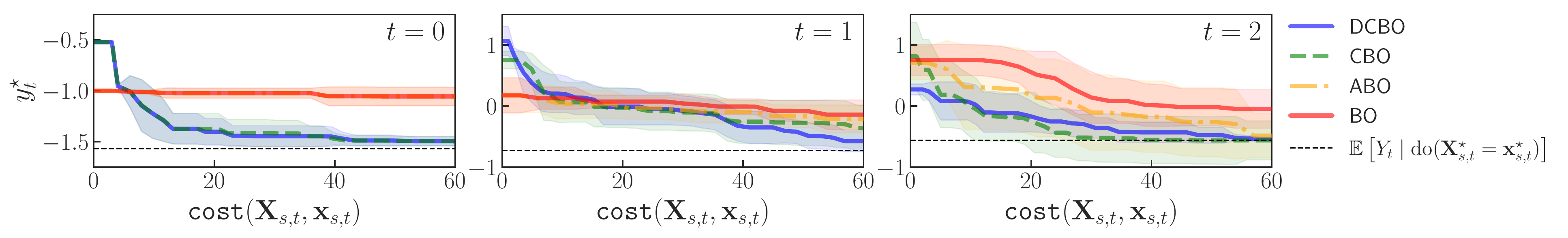}
\caption{Experiment \expcomplex with maximum number of trials $H=30$. Convergence of \our and competing methods across replicates. The black line gives the optimal $y^*_t, \forall t$. Shaded areas are $\pm$ one standard deviation.}
\label{fig:multivariate_example_noconv}
\end{figure}

%\clearpage
%\newpage

\subsection{Results over multiple datasets and replicates}
In this section we show the results obtained running all the experiments in the main paper across 10 different observational dataset sampled from the \sem given above. Results are given in \cref{tab:table_gaps_10obs}.

\begin{table}[ht!]
\centering
\caption{Average modified gap measure \textbf{across 10 observational datasets and 10 replicates}. Results are average figures across time steps. See \cref{fig:map_methods} for a summary of the compared methods. Higher values are better. The best result for each experiment is bolded. Standard errors in brackets.}

\label{tab:table_gaps_10obs}
\begin{tabular}{lcccccc}
\toprule
&  \multicolumn{6}{c}{Synthetic data} \\
\cmidrule(lr){2-7} 
& \expone & \expmissing & \expnoise &  \expcomplex & \expindep & \expnonstat \\
\cmidrule(lr){2-7} 
\multirow{2}{*}{$\our$} &  \textbf{0.83} & \textbf{0.82} & \textbf{0.82} & \textbf{0.48} & 0.46 & 0.63\\
          &  (0.06) & (0.05) & (0.05) & (0.02) & (0.03) & (0.06)\\
\multirow{2}{*}{$\cbo$} & 0.80 & 0.68 & 0.74 & \textbf{0.48} & 0.47 & \textbf{0.64} \\
          & (0.05) &  (0.04) & (0.09)  & (0.01) & (0.02) & (0.04)\\
\multirow{2}{*}{$\abo$} & 0.47 & 0.49 & 0.47 & 0.45 & 0.48 & 0.38\\
          &  (0.01) &  (0.00) & (0.01)  & (0.08) & (0.00)& (0.01)\\
\multirow{2}{*}{$\bo$} & 0.47 & 0.47 & 0.47 & 0.40 & \textbf{0.50} & 0.38\\
          & (0.01) &  (0.01) & (0.01)  & (0.07) & (0.00)& (0.01)\\
\bottomrule
\end{tabular}
\end{table}

\clearpage

\section{Intervening on differential equations (\exprealpol)}
\label{sec:ode_details}

In this section we describe in detail the experiment conducted in \cref{sec:real_experiments}. This example is based on the work by \citet{blasius2020long}. In this demonstration we continue along that paradigm when we investigate a biological systems in which two species interact, one as a predator and the other as prey. \citet{blasius2020long} performed microcosm experiments (in a chemostat or bioreactor) with a planktonic predator–prey system.

We use the provided ODE (\cref{sec:ode_blasius}) from the paper \citep[Methods]{blasius2020long}, which describes a stage-structured predator–prey community in a chemostat, as our \sem. As $\dataset^O$ we use the experimental data collected in vitro (for raw data see supplementary material of \citep{blasius2020long}). The corresponding \DAG (\cref{sec:plankton_dag}) and \sem (\cref{sec:ode_as_sem}) is constructed from the ODE (see overleaf), by rolling out the temporal variable dependencies in the ODE (the idea is well illustrated in \citep[Fig. 1]{weber2016incompatibility}).

Using this setup we investigate a requisite intervention policy necessary to reduce the concentration of dead animals in the chemostat -- $D_t$ in \cref{fig:dag_ode}.

\subsection{Interpreting differential equations as causal models}

A lot of work \citep{peters2020causal, kaiser2016limits, mooij2013uai, bongers2018random, hansen2014causal, weber2016incompatibility} has been dedicated to interpreting ordinary differential equations as structural causal models and consequently the associated task of intervening therein. More precisely, attention has been placed on extending causal theory \citep{pearl2000causality, spirtes1995dag} to the cyclic case, thereby enabling causal modelling of systems that involve feedback \citep{mooij2013uai,koster1996markov,dechter1996identifying,Neal_2000,hyttinen2012learning,rubenstein2016deterministic,peters2020causal}.

Naively, the simplest extension to the cyclical case is by simply dropping the acyclicity constraint from the \sem \citep[\S 1]{mooij2013uai}. But then we are faced with a new problem: how do we ``interpret cyclic structural equations'' \citep{mooij2013uai}? The most common approach is to ``assume an underlying discrete-time dynamical system, in which the structural equations are used as fixed point equations'' \citep{mooij2013uai}. This renders a simple schema wherein which we use the \sem as a set of updates rules, to find the values of the variables at $t+1$, using the information from $t$. This is a popular paradigm, advanced by e.g. \citet{spirtes1995dag,hyttinen2012learning, dash2005restructuring, lacerda2012discovering, mooij2011causal}. This is also the one we will use herein.

Another philosophy that deals with interventions in systems, was developed by \citet{casini2011models}. In the same vein is the work by \citet{gebharter2014formal,gebharter2016modeling}. This suite of work comes from the philosophy of science domain, rather than the statistical and machine learning literature, briefly reviewed in the previous two paragraphs. Theirs is primarily a concern with mechanisms (specifically ``mechanistic biological models with complex dynamics'' in the case of \citet{kaiser2016limits}) -- fundamentally they are the same thing as our causal effects but the perspective is different. \citet{casini2011models} suggests that modelling (acyclical) mechanisms should be done by way of recursive Bayesian networks (RBN). \citet{gebharter2014formal} points out some shortcomings with Caisini's approach and proposes the multilevel causal model (MLCM) as a remedy. Notably though, both works assume acyclicity (and so cannot feature mechanisms with feedback) of the problem domain a shortcoming that \citet{gebharter2016modeling} deals with by extending the MLCM to allow for cycles. For completeness we should also say that the RBN was extended to handle cycles by \citet{clarke2014modelling} (their approach was used \citet{gebharter2016modeling} for extending the MLCM).

\subsection{Ordinary differential equation}
\label{sec:ode_blasius}

\citet{blasius2020long} develop a mathematical model, the set of ordinary differential equations in \cref{eq:ode_N}--\cref{eq:ode_D}, to describe a stage-structured predator–prey community in a chemostat, which closely follows their experimental setup.

\begin{align}
    \frac{\dint N}{ \dint t} &= \delta N_{\textrm{in}} - F_P (N) P - \delta N \label{eq:ode_N}\\
    \frac{\dint P}{ \dint t} &= F_P (N) P - \frac{ F_B (P) B}{\varepsilon} - \delta P \\
    \frac{\dint E}{ \dint t} &= R_E - R_J - \delta E \\ 
    \frac{\dint J}{ \dint t} &= R_J - R_A - (m+\delta)J \\ 
    \frac{\dint A}{ \dint t} &= \beta R_A - (m+\delta)A \\
    \frac{\dint D}{ \dint t} &= m(J+A) - \delta D \label{eq:ode_D}
\end{align}

A full description of all variables and parameters can be found in \cref{table:ode}.
\begin{table}[htbp]
\centering
\caption{Table describing variable and parameters of ODE in \cref{eq:ode_N} -- \cref{eq:ode_D}.\label{table:ode}}
\begin{tabular}{clcc}
\toprule
Variable & Description & Value & Unit \\
\midrule
$N$ & Nitrogen (prey) concentration & $\in \mathbb{R}^1$ & $\unit$ \\ 
$P$ & Phytoplankton (predator) concentration & $\in \mathbb{R}^1$ & $\unit$ \\ 
$E$ & Predator egg concentration & $\in \mathbb{R}^1$ & $\unit$ \\ 
$J$ & Predator juvenile concentration & $\in \mathbb{R}^1$ & $\unit$ \\ 
$A$ & Predator adult concentration & $\in \mathbb{R}^1$ & $\unit$ \\ 
$D$ & Dead animal concentration & $\in \mathbb{R}^1$ & $\unit$ \\
\midrule
Parameter & Description & Value & Unit \\
\midrule
$N_{\text{in}}$ & Nitrogen concentration in the external medium & 80 & $\unit$ \\
$F_P$ & Algal nutrient uptake & - & $\unit$ \\
$F_B$ & Rotifer nutrient uptake & - & $\unit$\\
$\varepsilon$ & Predator assimilation efficiency & 0.55 & - \\
$R_E$ & Egg recruitment rate & - & -\\
$R_J$ & Juvenile recruitment rate & - & -\\
$R_A$ & Adult recruitment rate & - & -\\
$m$ & Rotifer (predator) mortality rate & 0.15 & Per day\\
$\beta$ & Adult/juvenile mass ratio & 5 & - \\
\bottomrule
\end{tabular}
\end{table}

For additional details see \citep[Methods]{blasius2020long}.

\subsection{Corresponding directed acyclical graph}
\label{sec:plankton_dag}

The original rolled-out \DAG (\cref{fig:ode_approx_1}) is modified to remove graph cycles (\cref{fig:ode_approx_2}), where the corresponding dependencies are replicated in the \sem. Now, note first that the temporal roll-out of \cref{fig:ode_dependencies} contains no cycles (once the self-cycles have been re-purposed as temporal transition functions). Nonetheless, comparing \cref{fig:ode_approx_1} and \cref{fig:ode_approx_2} it can be seen that two edges have been removed to simplify the causal dependencies on the phytoplankton (predator) concentration i.e. to make it only dependent on the nitrogen concentration in the external medium as well as the most immediate predator concentration at time $t-1$.

\begin{figure*}[ht!]
    \centering
    \begin{subfigure}[t]{0.2\textwidth}
        \centering
        \includegraphics[width=0.6\textwidth]{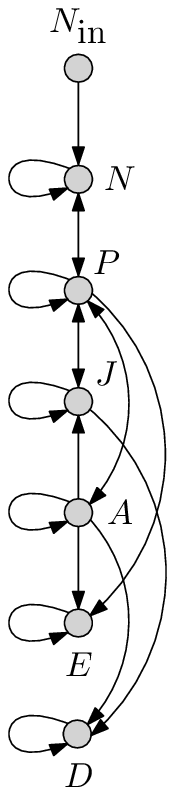}
        \caption{ODE variable dependencies.\label{fig:ode_dependencies}}
    \end{subfigure}%
    \hspace*{\fill}
    \begin{subfigure}[t]{0.39\textwidth}
        \centering
        \includegraphics[width=\textwidth]{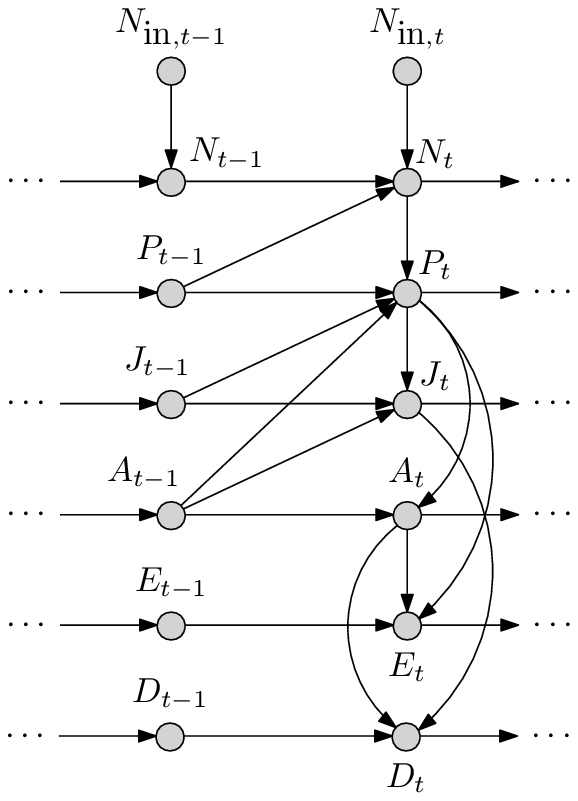}
        \caption{First DAG approximation. \label{fig:ode_approx_1}}
    \end{subfigure}%
    \hspace*{\fill}
    \begin{subfigure}[t]{0.39\textwidth}
        \centering
        \includegraphics[width=\textwidth]{figures/ode/second_approximation.eps}
        \caption{Second DAG approximation. \label{fig:ode_approx_2}}
    \end{subfigure}%
    \caption{Proposed time-indexed DAG representing the causal dependencies in the stage-structured predator–prey community in a chemostat. The vertices of the graph represent the concentrations of the different chemostat compounds at different discrete time points, where time is moving from left to right. \cref{fig:ode_dependencies} shows the variable dependencies as described in the original ODE found in \cref{eq:ode_N} -- \cref{eq:ode_D} -- notice the presence of self-loops and cycles. \Cref{fig:ode_approx_1} shows a first approximation to a corresponding causal graph, where the ODE has been `rolled' out in time -- note the absence of self-loops and cycles. \Cref{fig:ode_approx_2} shows a second approximation to the original ODE dynamics but this time removing two parent dependencies from $P_t$.}
\end{figure*}

One large deviation from the original set of ODEs, is that we treat the $N_{\text{in}}$ as an instrument variable and moreover allow it to be manipulative. This means that in order to reduce the concentration $D_t$ we allow the optimisation frameworks to intervene also on $N_{\text{in}}$.

\subsection{ODE as SEM}
\label{sec:ode_as_sem}

We fit the following \sem, based on the \DAG in \cref{fig:ode_approx_2}:
\begin{align}
    \label{eq:ODE_as_SEM}
    N_{\text{in},t} &= \epsilon_{N_{\text{in}}} \\
    N_t &= f_N(N_{\textrm{in}, t}, N_{t-1},  P_{t-1}) + \epsilon_N \\
    P_t &= f_P(N_{t}, P_{t-1}) + \epsilon_P \\
    J_t &= f_J(P_{t}, J_{t-1}, A_{t-1}) + \epsilon_J \\
    A_t &= f_A(P_t, A_{t-1}) + \epsilon_A \\
    E_t &= f_E(P_t, A_t, E_{t-1}) + \epsilon_E \\
    D_t &= f_D(J_t, A_t, D_{t-1}) + \epsilon_D
\end{align}

by placing \gptext{s} on all functions $\{f_i(\cdot) \mid i \in \{N_{\text{in}}, N,P,E,J,A,D\} \}$. This \sem is then used to generate interventional data and compute the values of $\{d^\star_t \mid t = 0,1,2\}$.

Further, $\{\epsilon_j \sim \mathcal{N}(0,1) \mid j \in \{N_{\text{in}}, N,P,E,J,A,D\} \} $. We set $T=3$, $N=4$ where the manipulative variables are: $N_{\text{in},t}, J_t$ and $A_t$. This means in practise that we are interested in the start of the simulation where we are trying to reduce the mortality concentration, in the chemostat, from beginning where our observational samples $\mathcal{D}^{O}$ are formed from four time-series\footnote{We use data-files \texttt{C1.csv, C2.csv, C3.csv, C4.csv} from the original publication \citep{blasius2020long} -- available here: \url{https://figshare.com/articles/dataset/Time_series_of_long-term_experimental_predator-prey_cycles/10045976/1} [Accessed: 01/04/21].}. 

Intervention domains are given by
\begin{align*}
    D(N_{\text{in},t}) &= [40.0,160.0] \\ 
    D(J_t) &= [0.0, 20.0] \\
    D(A_t) &= [0.0, 100.0]
\end{align*}
Notice that \our and \cbo explore the set 
\begin{equation*}
\missets_t = \{\{N_{\text{in},t}\}, \{J_t\}, \{A_t\}, \{N_{\text{in},t}, J_t\}, \{N_{\text{in},t}, A_t\}, \{J_t, A_t\}, \{N_{\text{in},t}, J_t, A_t\}\}    
\end{equation*}
while \bo and \abo will only intervene on $\{N_{\text{in},t}, J_t, A_t\}$. The optimal sequence of interventions is given by $\{\{J_0, A_0\}, \{M_1\}, \{M_2\}\}$. 

Results are shown in \cref{fig:ode_converged}. Note that the performance of \our and \cbo are almost identical.

\begin{figure}[ht!]
\centering
\includegraphics[width=\linewidth]{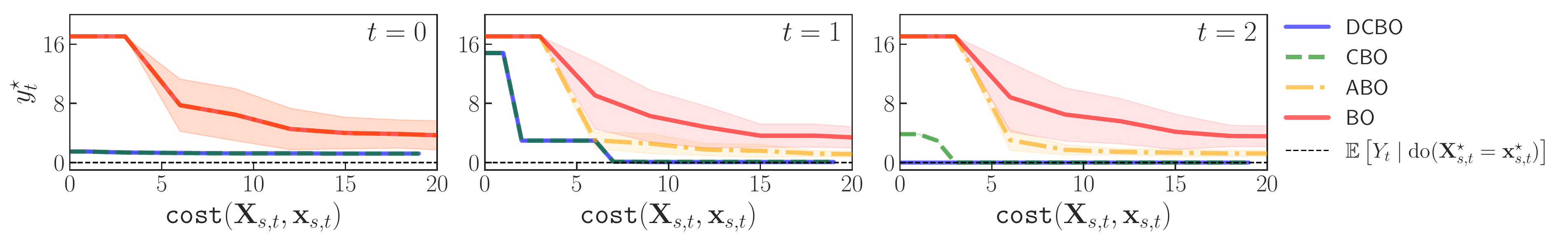}
\caption{Experiment \exprealpol with maximum number of trials $H=20$. Convergence of \our and competing methods across replicates. The black line gives the optimal $y^*_t, \forall t$. Shaded areas are $\pm$ one standard deviation.}
\label{fig:ode_converged}
\end{figure}

% \begin{figure}[ht!]
% \centering
% \includegraphics[width=\linewidth]{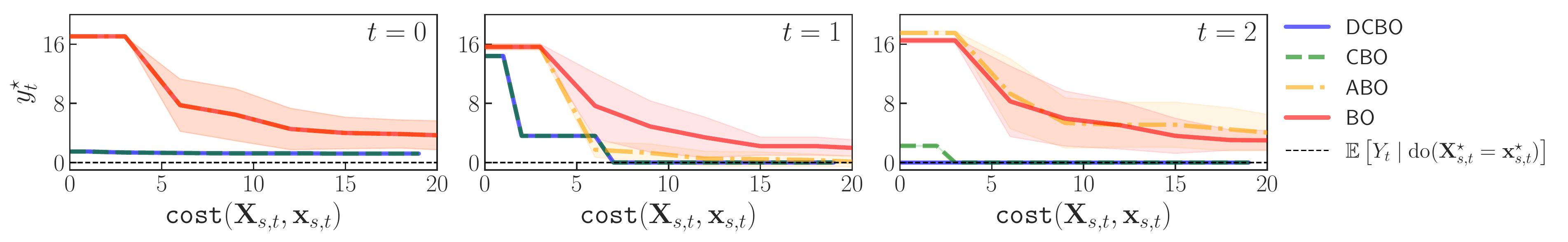}
% \caption{Experiment \exprealpol with maximum number of trials $H=20$. Convergence of \our and competing methods across replicates. The black line gives the optimal $y^*_t, \forall t$. Shaded areas are $\pm$ one standard deviation.}
% \label{fig:ode_not_converged}
% \end{figure}

% \subsection{Experimental observations}

% \nd{Maybe make some data plots.}

\newpage

\section{Applicability of \our to real-world problems}\label{sec:appl_rw}

As previously done in \cbo \citep{cbo} and other causal decision-making frameworks (\eg \cite{bareinboim2015bandits}) for static settings, in \our we assume to be able to repeatedly intervene in the system with interventions that have an instantaneous effect observed within the time slice duration. In other words, within every time step, we perform an intervention that changes the system and that leads to an effect for which we collect the corresponding target experimental value. However, the system reverts back once the experiment has been implemented and the agent can then explore alternative interventions and measure their effect too. In \our, the dynamics of the time resolution specified by the graph time indices is slower than the time you can take actions and see the effects.

While this assumption can be difficult to verify when interacting directly with the psychical world, it does not limit the applicability of the proposed framework to real-world problems. Indeed, in a variety of real-world settings, simulators or digital twins of real-world assets/processes are used in industrial settings and are fundamental in selecting actions before intervening in the real physical world. Digital twins provide virtual replicas of a physical object or system, such as a bridge or an engine, that engineers use for simulations before something is created or to monitor its operation in real-time. Examples are given by the digital twin of a 3\acro{D}-printed stainless steel bridge \citep{bridge}, \acro{nasa} and \acro{u.s.} Air Force vehicles \citep{glaessgen2012digital}, jet-engine monitoring, infrastructure inspection as well as cardiac medicine \citep{virtual_replica}. In all these settings, observational data are used to build the emulator which is ``a living computer model which is continuously learning to imitate the physical world'' \citep{bridge}. We can then intervene on the digital twin to collect interventional data and measure the causal effects. Intervening in a simulator has a cost e.g. a computationally cost thus interventions need to be carefully picked by employing a probabilistic model that correctly quantifies uncertainty and integrates different sources of information. In \our this is done by using the dynamic causal \gptext model. Once an intervention has been implemented, the digital twin ``reverts'' to its unperturbed/observational nature (i.e. without intervention), allowing the user to investigate other interventions without having changed the ``underlying state of the system'' nor, indeed, the true system. Once an optimal intervention is found, the agent can implement it in the real system thus changing it. Note that our approach allows for noise in the likelihood function thus the simulator can be a noisy version of the physical world. %In this paper we focus on demonstrating how one can take decisions once a simulator has been created resorting to domain knowledge. Future collaborations with practitioners across different fields will enable further developing of the \our framework.

\section{Connections}
\label{sec:connections}
We conclude by providing a discussion of the links between \our, the two methodologies used as benchmarks in the experimental session, namely the \cbo algorithm \citep{cbo} and the \abo algorithm \citep{abo}, and the literature on bandits and \rl. We discuss how their problem setups differ from our and highlight the reasons why \our is needed to solve the problem in \eq \eqref{eq:dcgo}. 

\paragraph{\cbo algoritm}  
The \cbo algorithm \citep{cbo} can be used to find optimal interventions to perform in a causal graph so as to optimize a single target node $Y$. \cbo addresses static settings where variables in $\graph$ are i.i.d. across time steps, \ie $p(\mat{V}_t) = p(\mat{V})\text{,}\forall t$, and only one static target variable exists. For instance, \cbo can be used to find the optimal intervention for $Y$ in the \DAG of \fig \ref{fig:map_methods}b. 
In order to use \cbo for the \DAG of \fig \ref{fig:map_methods}a, one would need to identify a unique target among $Y_{0:T}$, \eg $Y_T$. However, optimizing $Y_T$ might lead to chose interventions that are sub-optimal for $Y_{0:T-1}$ thus not solving the problem in \eq \eqref{eq:dcgo}. In addition, to find the optimal intervention for $Y_T$, \cbo explores all interventions in $\mathcal{P}(\mat{X}_{0:T})$ which results in a large search space and requires performing a high number of interventions. This slows down the convergence of the algorithm and increases the optimization cost. %leading to higher cost . %In a variety of application this is not possible. 
%Exploring a large search space would also significantly slow down the algorithm (see experiments X). 
One can alternatively run \cbo $T$ times optimizing $\Yt$ at each time step. Doing that would require re-initializing the surrogate models for the objective functions at every $t$ and would thus imply loosing all the information collected from previous interventions. Finally, in optimizing $\Yt$, \cbo does not account for how the previously taken interventions have changed the system again slowing down the convergence of the algorithm. In order to recursively optimise intermediate outputs given the previously taken decisions one need to resort to \our. By changing the objective function at every time step, incorporating prior interventional information in the objective function and limiting the search space at every time step based on the topology of the $\graph$, \our addresses the \cbo issues mentioned above making it a framework that can be practically used for sequential decision making in a variety of applications.

\paragraph{\abo algorithm} 
While \cbo tackles the causal dimension of the \dgo problem but not the temporal dimension, the \abo algorithm also addresses dynamic settings but does not account for the causal relationships among variables, see \fig \ref{fig:map_methods} for a graphical representation of the relationship between these methods. As in \bo, \abo finds the optimal intervention values by breaking the causal dependencies between the inputs and intervening simultaneously on all of them thus setting $\Xst = \Xt$ for all $t$. Additionally, considering the inputs as fixed and not as random variables, \abo does not account for their temporal evolution. This is reflected in the \DAG of \fig \ref{fig:map_methods}(c) where both the horizontal links between the inputs and the edges amongst the input variables are missing. In solving the problem in \eq \eqref{eq:dcgo} for the \DAG in \fig \ref{fig:map_methods}a, \bo would disregard both the temporal dependencies in $Y$ and the input dependencies (\DAG in \fig \ref{fig:map_methods}d) while \abo would keep the former but ignore the latter. Differently from our approach, \abo considers a continuous time space and places a surrogate model on $\Yt = f(\x, t)$. $f(\x, t)$ is then modelled via a spatio-temporal \gptext with separable kernel. The \abo acquisition function for $f(\x, t)$ is then restricted to avoid collecting points in the past or too far ahead in the future where the \gptext predictions have high uncertainty. The spatio-temporal \gptext allows \abo to predict the objective function ahead in time and track the evolution of the optimum. However, in order for \abo to work the objective function rate of change over time must be slow enough to gather enough samples to learn the relationships in space and time. In our discrete time setting this condition is equivalent to ask that, at every time step, it is possible to perform different interventions with an underlying true function that does not change.  
%collect enough sample to find within each time step we can intervene $P$ times and the function will be the same withing t. 
Note that also in \our, Assumptions \ref{assumptions} imply a certain level of regularity in the objective functions. For instance, in the \DAG of \fig \ref{fig:map_methods}a, given that $\pa{Y_t} = \{Z_t, Y_{t-1}\}, \forall t > 0$, the objective functions have a constant shape and are only shifted vertically by the performed interventions. While some regularity is also required in \our, through the causal graph we impose more structure on the objective function and its input thus lowering the need for exploration. The more accurate the estimation of the functions in the \sem is the more we can track the dynamic of the objective function and we can deal with sharp changes in the objectives.  One additional important difference between \abo and \our is in the exploration of different intervention set. Indeed, by intervening on all variables, \abo can lead to sub-optimal solution. As mentioned for \bo in \cite{cbo}, depending on the structural relationships between variables, intervening on a subgroup might lead to a propagation of effects in the causal graph and a higher final target. In addition, intervening on all variables is cost-ineffective in cases when the same target can be obtained by setting only a subgroup of them. This is particularly true in the time setting as the optimal intervention set might not only be a subset of $\partsXt$ but might also evolve overtime.

\textbf{Bandits and \rl} In the broader decision-making literature, causal relationships have been previously considered in the context of multi-armed bandit problems \citep[\mab,][]{bareinboim2015bandits, lattimore2016causal, lee2018structural, lee2019structural} and reinforcement learning \citep[\rl,][]{lu2018deconfounding, buesing2018woulda, foerster2018counterfactual, zhang2019near, madumal2020explainable}. In these cases, the actions or arms correspond to interventions on an arbitrary causal graph where there exists complex links between the agent’s decisions and the received rewards. Causal \mab algorithms focus on static settings where the distribution of the rewards is stationary and is not affected by the pulled arms. In addition, \mab focus on intervention on discrete variable and only deal with the problem of selecting the right intervention set but not the intervention value. Differently from \our, \rl algorithms explicitly model the state dynamic and account for the way each action affect the state of the environment.
% While \mab algorithms disregard the existence of dynamic variables and do not account for the way in which actions affect the system, causal \rl algorithms explicitly model the state dynamic and account for the actions influence on the environment. 
\our setting differs from both causal \rl and causal \mab. \our does not have a notion of \emph{state} and therefore does not require an explicit model of its dynamic. The system is fully specified by the causal graph and the connected structural equation model. As in \bo, \our does not aim at learning an optimal policy but rather a set of optimal actions.
Furthermore, within each time step, \our allows the agent to perform a number of explorative interventions which are not modifying the environment. Once the optimal action is identified this is propagated in the system thus changing it. Differently from both \mab and \rl, \our is \emph{myopic} that is interventions are decided by maximizing the one-step ahead utility function. We leave the integration of \our with a non-myopic \bo scheme to future work.

\end{document}